\documentclass{article}

\newif\ifarXiv
\arXivtrue

\PassOptionsToPackage{sort, numbers, compress} {natbib}

\usepackage{subcaption}

\ifarXiv
    \usepackage[preprint]{neurips_2026}
\else \usepackage{neurips_2026}
\fi

\usepackage[utf8]{inputenc} \usepackage[T1]{fontenc}    \usepackage{hyperref}       \usepackage{url}            \usepackage{booktabs}       \usepackage{amsfonts}       \usepackage{nicefrac}       \usepackage{microtype}      \usepackage{xcolor}         \usepackage{babel}
\usepackage{paralist}
\usepackage{enumitem}
\usepackage{siunitx}
\usepackage{mathtools}
\usepackage{wrapfig}
\usepackage{silence}

\usepackage{amsmath}
\usepackage{amsthm}
\usepackage{thmtools}
\usepackage{mathrsfs}
\usepackage[dvipsnames]{xcolor} 

\definecolor{OIOrange}    {RGB}{230, 159,   0}
\definecolor{OISkyBlue}   {RGB}{ 86, 180, 233}
\definecolor{OIGreen}     {RGB}{  0, 158, 115}
\definecolor{OIYellow}    {RGB}{240, 228,  66}
\definecolor{OIBlue}      {RGB}{  0, 114, 178}
\definecolor{OIVermillion}{RGB}{213,  94,   0}
\definecolor{OIRose}      {RGB}{204, 121, 167}

\usepackage{tcolorbox}

\tcbset{
  boxsep     = 1.0mm,
  colframe   = OIBlue,
  colback    = OIBlue!10,
  left       = 2.0mm,
  right      = 2.0mm,
sharp corners,
}

\hypersetup{
    colorlinks = true,
    urlcolor   = OIBlue,
    citecolor  = OIBlue,
    linkcolor  = OIBlue,
}
 
\newtheorem{theorem}{Theorem}
\newtheorem{lemma}{Lemma}

\newtheorem{definition}{Definition}
\newtheorem{assumption}{Assumption}
\newtheorem{remark}{Remark}
 
\newcommand{\R}{\mathbb{R}}
\newcommand{\Sd}{S^{d-1}}
\newcommand{\ECT}{\mathrm{ECT}}
\newcommand{\vertK}{\mathrm{vert}}
\newcommand{\coface}{\mathrm{coface}}
\newcommand{\1}{\mathbf{1}}
\newcommand{\vx}{\mathbf{x}}

\newcommand{\inlinetriangle}[2][0]{\tikz[baseline=#2, rotate=#1]{
    \coordinate (A) at (0,0);
    \coordinate (B) at (0.4em,1.6ex);
    \coordinate (C) at (0.8em,0);
    \draw (A) -- (B) -- (C) -- cycle;
    \filldraw (A) circle (0.8pt)
              (B) circle (0.8pt)
              (C) circle (0.8pt);
  }}

\title{Geometry-Aware Simplicial Message Passing}

\author{Elena Xinyi Wang\\
  AIDOS Lab, University of Fribourg\\
  Fribourg, Switzerland\\
  \url{xinyi.wang@unifr.ch} \\
  \And
  Bastian Rieck\\
  AIDOS Lab, University of Fribourg\\
  Fribourg, Switzerland\\
  \url{bastian.grossenbacher@unifr.ch} \\
}

\begin{document}

\maketitle

\begin{abstract}
The Weisfeiler–Lehman (WL) test and its simplicial extension (SWL) characterize the combinatorial expressivity of message passing networks, but they are blind to geometry, i.e., meshes with \emph{identical} connectivity but \emph{different} embeddings are indistinguishable. 
We introduce the \textsc{Geometric Simplicial Weisfeiler–Lehman}~(GSWL) test, which incorporates vertex coordinates into color refinement for geometric simplicial complexes.
In addition, we show that
\begin{inparaenum}[(i)]
    \item the expressivity of geometry-aware simplicial message passing schemes is bounded above by GSWL, and
    \item that there exist parameters such that the discriminating power of GSWL is matched by these schemes on any fixed finite family of geometric simplicial complexes.
\end{inparaenum}
Combined with the Euler Characteristic Transform~(ECT), a complete invariant for geometric simplicial complexes, this yields a geometric expressivity characterization together with an approximation framework. 
Experiments on synthetic and mesh datasets serve to validate our theory, showing a clear hierarchy from combinatorial to geometry-aware models.
\end{abstract}
 
\section{Introduction}
 
The expressive power of graph neural networks (GNNs) is typically understood through the lens of the Weisfeiler--Lehman~(WL) test for graph isomorphism~\citep{xu2019powerful,morris2019weisfeiler, Morris23a}. 
Message passing GNNs with injective aggregation match WL in their ability to distinguish non-isomorphic graphs, and this characterization has guided architectural design across the field.
Bodnar et al.~\citep{bodnar2021weisfeiler} extended this framework to simplicial complexes, introducing the Simplicial WL~(SWL) test and Message Passing Simplicial Networks~(MPSNs). 
Their key results state that SWL is \emph{strictly more powerful} than \mbox{$1$-WL} and \emph{no less powerful} than \mbox{$3$-WL}. 
This provides a combinatorial expressivity theory for higher-order message passing. However, SWL and MPSNs operate on the abstract simplicial complex alone and are thus entirely blind to the geometric realization: If two meshes share the same vertex-edge-triangle incidence structure but carry different vertex coordinates, SWL produces \emph{identical} colorings for both.
Our work directly addresses this gap.
 
A parallel line of work on geometric GNNs~\citep{joshi2023expressive,han2022geometrically} studies expressivity under group actions, showing $E(n)$-equivariant architectures distinguish geometric graphs up to Euclidean isometry. 
These results target point clouds and graphs, and equivalence \emph{modulo symmetry} rather than embedding on higher-order simplicial structures.
To characterize the \emph{geometric expressivity} of simplicial message passing, we therefore need an invariant that captures embedding information and is ideally complete, so that matching it loses no information.
The Euler Characteristic Transform~(ECT)~\citep{munch2025ect, turner2014persistent, curry2022many,ghrist2018persistent, Rieck25a} is a natural fit.
It maps an embedded complex to a function on directions and thresholds by tracking the Euler characteristic of sublevel sets, and is a complete invariant for embedded simplicial complexes. 
It is not only complete, but has also been used as a topological descriptor for shape analysis~\citep{ECT2021barley,tang2022ectTumor, roell2024differentiable}.

\paragraph{Our contributions.}
We use the ECT to characterize the expressivity of simplicial message passing for \emph{embedded} simplicial complexes, where equivalence means equality of the embedded complex up to relabeling. 
Our main contributions are:
\begin{enumerate}[left = 0pt, itemsep=0.2em, topsep=0.2em]
\item \textbf{Geometric Simplicial Weisfeiler--Lehman (GSWL):} A geometric version of SWL that is geometry-aware and refines color through boundary and coboundary adjacencies~(Section~\ref{sec:gswl}).

\item \textbf{Message-passing characterization:} We show that geometry-aware simplicial message passing is bounded above by GSWL and can match GSWL on any fixed finite family~(Theorems~\ref{thm:upper}--\ref{thm:lower}). Moreover, combining the lower bound with coordinate recovery results in exact computation of sampled ECT values on finite families~(Theorem~\ref{thm:ect}), and $\varepsilon$-approximation of the full ECT on bounded classes~(Theorem~\ref{thm:approx}) via a prior stability result~\citep{george2025stability}.

\item \textbf{Experiments:} We validate all theoretical predictions on synthetic meshes, manifold triangulations~\citep{mantra2025}), and registered human body meshes~\citep{bogo2014faust}, with a full baseline hierarchy including GCN, GIN, DeepSets, and combinatorial SMP (Section~\ref{sec:experiments}).
\end{enumerate}

Since the ECT is a complete invariant, this establishes that geometry-aware simplicial message passing can realize sampled values of a complete geometric invariant on finite families, and approximate the full invariant on bounded classes.
In this sense, we establish that the ECT plays the role for embedded simplicial complexes that WL plays for graphs.
 
\paragraph{Related work.}
The WL test~\citep{weisfeiler1968reduction} and its connection to GNN expressivity~\citep{xu2019powerful,morris2019weisfeiler} is a foundational paradigm of graph learning research~\citep{Morris23a}.
Higher-order $k$-WL variants~\citep{maron2019provably,morris2020weisfeiler} trade locality for power. 
Bodnar et al.~\citep{bodnar2021weisfeiler} introduced SWL and MPSNs for simplicial complexes, proving $\text{SWL} > \text{WL}$ and $\text{SWL} \geq \text{3-WL}$. 
Our GSWL reuses their adjacency structure but with coordinate-derived initial features, which is a strict refinement.
The Euler Characteristic Transform~(ECT) was introduced by Turner et al.~\citep{turner2014persistent} and shown to be injective on a broad class of shapes. 
Curry et al.~\citep{curry2022many} and Ghrist et al.~\citep{ghrist2018persistent} proved injectivity for finitely many directions.\footnote{
    Despite their different publication dates, both of these publications appeared in parallel as preprints.
}
Crawford et al.~\citep{crawford2020predicting} and R\"oell and Rieck~\citep{roell2024differentiable} developed differentiable ECT layers for machine learning, but without the expressivity-theoretic connection to message passing that we establish here.
Explicit connections to message passing with local variants of the ECT were sketched by von Rohrscheidt and Rieck~\citep{vonrohrscheidt2025diss} and pursued further by Amboage et al.~\citep{amboage2026leap}.
Adopting a complementary perspective, geometric GNNs~\citep{joshi2023expressive,han2022geometrically,villar2021scalars,brandstetter2022geometric} characterize expressivity under Euclidean symmetries for point clouds and graphs. 
Both perspectives are now commonly subsumed in the field of \emph{topological deep learning}~\citep{papamarkou2024position}, which provides a general framework, among other things, message passing on cell and simplicial complexes, but typically without expressivity characterizations tied to geometric invariants.
Notably, \citet{akbari2026logical} characterize TNN expressivity on abstract combinatorial complexes, whereas our setting is geometric and thus complementary.
\begin{tcolorbox}[title=In a nutshell]
Our work differs in two ways: We consider \emph{simplicial complexes} (not just graphs) and our expressivity target is \emph{embedding equivalence}~(not equivalence up to group action), i.e.,  in our setting, \inlinetriangle[0]{-0.5ex} and
\inlinetriangle[180]{-2.0ex} are non-equivalent simplicial complexes.
\end{tcolorbox}

\section{Background}\label{sec:background}

We review the WL test for graphs, its simplicial extension SWL, and the Euler Characteristic Transform. 
These are the three components our construction builds on.
 
\subsection{The Weisfeiler--Lehman~(WL) test and GNNs}
 
The $1$-WL test~\citep{weisfeiler1968reduction} iteratively refines vertex colors by hashing each vertex's color together with the multiset of its neighbors' colors, i.e.,
\begin{equation}
c_v^{(t+1)} = \mathrm{HASH}\!\left(c_v^{(t)},\; \{\!\{c_u^{(t)} : u \in \mathcal{N}(v)\}\!\}\right).
\end{equation}
Xu et al.~\citep{xu2019powerful} and Morris et al.~\citep{morris2019weisfeiler} proved that message-passing GNNs with injective aggregation are exactly as powerful as $1$-WL. 
This characterization has been central to understanding and improving GNN architectures.
 
\subsection{Simplicial complexes and the simplicial Weisfeiler--Lehman~(SWL) test}
 
A simplicial complex $K$ on a vertex set $V$ is a collection of nonempty subsets of $V$ that is closed under taking subsets~\citep{hatcher}. 
A $k$-simplex $\sigma = \{v_0, \ldots, v_k\}$ has dimension $k$. 
We write $\partial \sigma$ for the set of codimension-1 faces (boundary) and $\coface(\sigma)$ for the set of simplices of dimension $\dim(\sigma)+1$ containing $\sigma$ (coboundary).
Bodnar et al.~\citep{bodnar2021weisfeiler} defined the Simplicial WL (SWL) test by initializing all simplices of the same dimension with the same color and refining via boundary, coboundary, lower-adjacent, and upper-adjacent neighborhoods.
They show that boundary and upper adjacencies suffice to reach the converged SWL partition.  
Here, we work with boundary and coboundary adjacencies,\footnote{
We recall that two simplices are boundary-adjacent when one is a face of the other, and coboundary-adjacent when one is a coface of the other. Thus, $\sigma$ aggregates from $\partial\sigma$~(one dimension below) and from $\coface(\sigma)$~(one dimension above).
} 
which yield the \emph{same} converged partition but differ from the boundary/upper-adjacent choice of Bodnar et al.\ at finite depth.
Specifically, our coordinate-recovery argument~(Lemma~\ref{lem:coord-recovery}) tracks how vertex information propagates up the Hasse diagram one dimension at a time, and this propagation rate underlies the depth bound $L\ge\dim(K)$ in Theorem~\ref{thm:ect}.

An \emph{embedded} simplicial complex $(K, \vx)$ consists of an abstract complex $K$ together with an injective map $\vx\colon \vertK(K) \to \R^d$ assigning coordinates to vertices. 
For a simplex $\sigma = \{v_0, \ldots, v_k\}$, we define coordinate-derived features, i.e.,
\begin{equation}
\begin{split}
\text{Vertices:} &\quad \vx_v \in \R^d, \\
\text{Edges } \{u,v\}: &\quad \tfrac{1}{2}(\vx_u + \vx_v),\; \|\vx_u - \vx_v\|, \\
\text{Triangles } \{u,v,w\}: &\quad \tfrac{1}{3}(\vx_u + \vx_v + \vx_w),\; \mathrm{area}(\{u,v,w\}).
\end{split}
\end{equation}
Two embedded complexes $(K_1, \vx_1)$ and $(K_2, \vx_2)$ are \emph{isomorphic} if there is a simplicial isomorphism $\phi\colon K_1 \to K_2$ such that $\vx_2 \circ \phi = \vx_1$ on vertices. 
Note that SWL, which ignores coordinates, cannot distinguish $(K, \vx)$ from $(K, \vx')$ for any two embeddings of the same abstract complex.
 
\subsection{The Euler characteristic transform}
 
For an embedded complex $(K, \vx)$, direction $\nu \in \Sd$, and threshold $t \in \R$, define the sublevel set $K_{\nu,t} := \{\sigma \in K : t_\nu(\sigma) \leq t\}$, where
$
t_\nu(\sigma) := \max\{\vx_u \cdot \nu : u \in \vertK(\sigma)\}
$
is the entry time of $\sigma$ in the direction-$\nu$ filtration. 
The \emph{Euler characteristic transform} is
\begin{equation}
\ECT(K, \vx)(\nu, t) = \chi(K_{\nu,t}) = \sum_{\sigma \in K} (-1)^{\dim \sigma} \, \1\{t_\nu(\sigma) \leq t\}.
\end{equation}
Since $\tau\subseteq\sigma$ implies $t_\nu(\tau)\le t_\nu(\sigma)$ (the maximum is taken over a subset of vertices), $K_{\nu,t}$ is closed under faces and is therefore a subcomplex of $K$.
 
\begin{theorem}[Turner et al.~\citep{turner2014persistent}, Curry et al.~\citep{curry2022many}, Ghrist et al.~\citep{ghrist2018persistent}]
Fix a finite abstract simplicial complex $K$.  
Two embeddings $\vx,\vx'\colon\vertK(K)\to\R^d$ that satisfy $\ECT(K,\vx)=\ECT(K,\vx')$ as functions on $S^{d-1}\times\R$ are equal as embeddings.  
Equivalently, the map $\vx\mapsto\ECT(K,\vx)$ is injective.
\end{theorem}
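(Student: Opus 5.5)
Since $K$ is fixed, it suffices to recover each vertex coordinate $\vx_v$ from $\ECT(K,\vx)$. The plan has two steps. First, for every direction $\nu$, read off the multiset of projections $\{\vx_v\cdot\nu\}$, or at least each individual vertex height. Second, recover $\vx_v\in\R^d$ by combining its heights over enough directions.

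\textbf{Step 1: localizing simplices in a generic direction.} Call $\nu$ generic for $\vx$ if the heights $\vx_u\cdot\nu$ are pairwise distinct; since $\vx$ is injective, this holds off a finite union of great spheres. For generic $\nu$, each simplex $\sigma$ has a well-defined top vertex $\mathrm{top}_\nu(\sigma)$, with $t_\nu(\sigma)=\vx_{\mathrm{top}}\cdot\nu$. The function $t\mapsto\ECT(K,\vx)(\nu,t)$ is then a step function that jumps only at vertex heights. The jump at $h_v=\vx_v\cdot\nu$ is the reduced-star Euler sum
\[
\sum_{\sigma:\,\mathrm{top}_\nu(\sigma)=v}(-1)^{\dim\sigma}
=1-\chi\bigl(\mathrm{lk}^{-}_\nu(v)\bigr),
\]
where $\mathrm{lk}^{-}_\nu(v)$ is the lower link of $v$. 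This can vanish, so individual vertex heights need not appear as jumps. This is the main obstacle.

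\textbf{Handling the obstacle: rotating the direction.} To get around it, I would use the fact that $K$ is fixed and vary $\nu$ continuously. Pick a generic $\nu$ and rotate it slightly. The heights move linearly in $\nu$, and the ordering of vertices changes only across the walls $\{(\vx_u-\vx_w)\cdot\nu=0\}$. The ECT therefore determines a piecewise-linear family of jump locations, parametrized over $\Sd$. Each nonzero jump traces the function $\nu\mapsto\vx_v\cdot\nu$ locally, which is the restriction of a linear functional; linearity in $\nu$ then pins down $\vx_v$ exactly.

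For vertices whose jumps always vanish, I would pass to extremal directions instead. For $\nu$ near a direction at which $v$ is the unique lowest vertex, $v$ is the global minimum, its lower link is empty, and its jump is $1\neq0$. However, only vertices of the convex hull are ever strict minima. For interior vertices I would instead invoke the paper's cited injectivity result, namely Theorem~1 of Turner/Curry/Ghrist. That result handles the general case via the Radon/Schapira-type inversion: the ECT determines the constructible function $\sum_\sigma(-1)^{\dim\sigma}\1_{\mathrm{relint}\,\sigma}$, which equals $\1_{|K|}$ in total. From it the Euler-weighted point masses can be recovered through the Euler-calculus Radon transform. The vertices are then identified as the points where the local Euler characteristic structure of the embedded stratification is singular.

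\textbf{Step 2: matching with $K$ and concluding.} Having recovered the set of vertex positions $\{\vx_v\}$, I would use that both embeddings are of the same abstract $K$ and agree on all ECT values. Together with the recovered positions, the ECT also determines, for each potential labeling, the sublevel-set counts of edges and higher simplices. I would then argue that the labeling compatible with both is the identity up to automorphisms of $K$, giving $\vx=\vx'$ as embeddings in the paper's sense of isomorphism.

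The step I expect to be hardest is the zero-jump/interior-vertex issue in Step 1. I would address it by reducing to the Schapira inversion theorem rather than by direct combinatorics.
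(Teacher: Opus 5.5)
\textbf{There is a genuine gap, and it cannot be closed: the statement is false as formulated.} The gap sits exactly where you placed the ``main obstacle''. Read literally, $\vx=\vx'$ already fails for any $K$ with a nontrivial automorphism. Your Step~2 rightly weakens the conclusion to isomorphism, but even that fails.

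\emph{Vertices with zero jump are invisible.} Your jump formula shows that a vertex whose lower link has Euler characteristic $1$ in every generic direction never appears. Take $K$ the path with edges $\{a,b\},\{b,c\}$ and set $\vx_a=0$, $\vx_c=e_1$, $\vx_b=s\,e_1$ with $0<s<1$. For every $\nu\in\Sd$ one checks $\ECT(K,\vx)(\nu,t)=\1\{t\ge\min(0,\nu\cdot e_1)\}$, which is independent of $s$. The only nontrivial automorphism of $K$ swaps $a$ and $c$, and it does not relate different values of $s$.

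\emph{Your fallbacks do not help.}
\begin{itemize}
\item The extremal-direction trick is vacuous here: a strict minimum in some direction already has jump $1$, so it never applies to the problematic vertices.
\item Invoking Theorem~1 for ``interior vertices'' is circular, since that is the statement you are proving.
\item Schapira inversion cannot recover more than the ECT encodes. The ECT is the half-space Euler-integral transform of $F_\vx=\sum_\sigma \1_{\mathrm{relint}\,\vx(\sigma)}$ for affinely independent simplices. This sum carries no signs: the $(-1)^{\dim\sigma}$ in the ECT comes from $\chi_c$ of open simplices, so your signed sum is not $\1_{|K|}$. For a geometric embedding, $F_\vx$ is just the indicator of the image. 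In the example, $\vx_b$ is not a singular point of the indicator of the segment $[0,e_1]$, so ``vertices are where the local structure is singular'' fails.
\end{itemize}

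\emph{Step~2 fails independently, even when every vertex is visible.} Let $K$ have triangles $abc$ and $acd$, and let $p_1,\dots,p_4$ be the corners of the unit square in cyclic order. Let $\vx$ send $a,b,c,d$ to $p_1,p_2,p_3,p_4$, and let $\vx'$ send them to $p_2,p_3,p_4,p_1$. These are the two diagonal triangulations of the square. Both have the square as image, so both have ECT $\1\{t\ge\min_i p_i\cdot\nu\}$; a direct count confirms this. Every automorphism of $K$ preserves the shared edge $\{a,c\}$, so $\vx'\neq\vx\circ\phi$ for every automorphism $\phi$. Hence knowing the vertex positions does not determine the labeling up to automorphism.

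\emph{What the cited works actually prove.} Turner et al., Curry et al.\ and Ghrist et al.\ show that the ECT is injective on shapes, i.e.\ constructible sets or functions. So the ECT determines $F_\vx$, not the vertex map of a fixed $K$. The paper only cites these results and gives no proof of its stronger claim.

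\emph{What you can salvage.} Your rotation argument in Step~1 legitimately recovers the positions of vertices whose jump is nonzero on an open set of directions, and no more. A true theorem must either weaken the conclusion to equality of $F_\vx$ (the embedded image), or add hypotheses that exclude flat vertices and re-triangulations.
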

In particular, fixing the abstract complex $K$, the ECT recovers the vertex coordinates in the ambient coordinate system, which is the form of completeness used throughout our downstream results.
 
\section{Geometric simplicial Weisfeiler--Lehman test}\label{sec:gswl}

We modify SWL by initializing vertices with their coordinates instead of a uniform color, while leaving the adjacency structure (boundary and coboundary) unchanged. 
After a sufficient number of refinement steps, each simplex recovers the coordinates of all its vertices.

\begin{definition}[GSWL]
\label{def:gswl}
Fix an embedded simplicial complex $(K,\vx)$ with injective $\vx$. Define
colors recursively by
\[
c^{(0)}_\sigma :=
\begin{cases}
(0,\,\vx_v) & \text{if } \sigma=\{v\}\text{ is a vertex,}\\
(k,\,\Phi_k(\vx_{v_0},\dots,\vx_{v_k})) & \text{if } \sigma=\{v_0,\dots,v_k\}\text{ with }k\ge 1,
\end{cases}
\]
where $\Phi_k$ is a fixed permutation-invariant function of the vertex coordinates of $\sigma$ (e.g., sorted tuple of coordinates, or the empty feature if only dimension is used).  
For $\ell\ge 0$,
\[
c^{(\ell+1)}_\sigma := \mathrm{HASH}\!\left(
  c^{(\ell)}_\sigma,\;
  \{\!\{c^{(\ell)}_\tau : \tau\in\partial\sigma\}\!\},\;
  \{\!\{c^{(\ell)}_\rho : \rho\in\coface(\sigma)\}\!\}
\right),
\]
where $\mathrm{HASH}$ is injective.  
We call two embedded complexes \emph{GSWL-$L$ equivalent}, denoted by $K_1\equiv_L K_2$, if $\{\!\{c^{(L)}_\sigma:\sigma\in K_1\}\!\} =\{\!\{c^{(L)}_\sigma:\sigma\in K_2\}\!\}$.
\end{definition}

\begin{wrapfigure}{r}{0.48\textwidth}
    \vspace{-\intextsep}
    \centering
    \includegraphics[width=0.47\textwidth]{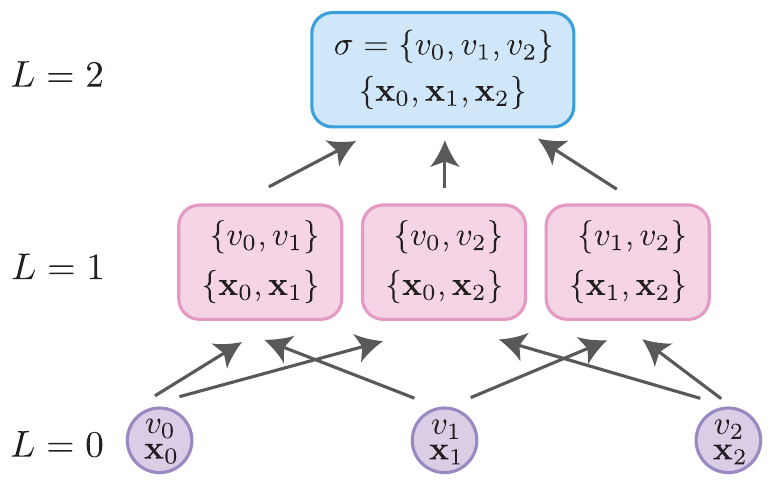}
    \caption{
        Coordinate recovery in GSWL. We need $k$ propagation steps to
        recover \mbox{$k$-simplices}.
    }
    \label{fig:coord_recovery}
    \vspace{-\intextsep}
\end{wrapfigure}

The key difference to SWL~\citep{bodnar2021weisfeiler} is the initialization: Each simplex receives features derived from its vertex coordinates rather than a uniform color.  
In particular, vertices carry their coordinates, and higher-dimensional simplices carry $\Phi_k$, a \emph{permutation-invariant} function of their vertex coordinates such as edge midpoint and length, or triangle centroid and area, as used in our experiments (Section~\ref{sec:experiments}).  
Since this initialization is strictly finer than SWL's constant coloring on non-degenerate embeddings, GSWL is \emph{at least as powerful} as SWL on abstract complexes and \emph{strictly more powerful} on embedded complexes.  
The choice $\Phi_k\equiv\emptyset$ (dimension-only initialization for non-vertex simplices) is the minimal instance of Definition~\ref{def:gswl} that still suffices for Lemma~\ref{lem:coord-recovery} and all downstream results; richer choices of $\Phi_k$ only refine the coloring further.
Notice that for a vertex $\sigma=\{v\}$ the boundary $\partial\sigma$ is empty, so $\mathrm{HASH}$ receives the empty multiset in the boundary slot; analogously for top-dimensional simplices in the coboundary slot.  
We adopt the convention that $\mathrm{HASH}$ is well-defined on, and injective in, inputs that include empty multisets.
Figure~\ref{fig:coord_recovery} illustrates the induction in Lemma~\ref{lem:coord-recovery}: vertex coordinates propagate upward through the Hasse diagram from vertices to edges and then to triangles.

\begin{restatable}[Coordinate recovery]{lemma}{coordinaterecovery}
\label{lem:coord-recovery}
Let $(K, \vx)$ be an embedded simplicial complex whose vertex map
$\vx\colon\vertK(K)\to\mathbb{R}^d$ is injective, and let
$\sigma=\{v_0,\dots,v_k\}$ be a $k$-simplex of $K$.  If $L\ge k$, then
$c^{(L)}_\sigma$ determines the unordered set
$\{\vx_{v_0},\dots,\vx_{v_k}\}$.  
In particular, the entry time
$t_\nu(\sigma)=\max_{u\in\vertK(\sigma)} \vx_u\cdot \nu$ is determined
for every $\nu\in S^{d-1}$.
\end{restatable}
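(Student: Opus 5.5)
We prove the statement by induction on the dimension $k$, in the strengthened form: for every $k$-simplex $\sigma$ and every $L\ge k$, the color $c^{(L)}_\sigma$ determines the unordered set $\{\vx_{v_0},\dots,\vx_{v_k}\}$. Formally, this means there is a function $F_{k,L}$ on colors with $F_{k,L}(c^{(L)}_\sigma)=\{\vx_v : v\in\sigma\}$ for all $k$-simplices $\sigma$ of $K$.

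The first ingredient is monotonicity of refinement. Because $\mathrm{HASH}$ is injective and $c^{(\ell+1)}_\sigma$ contains $c^{(\ell)}_\sigma$ as its first argument, $c^{(\ell)}_\sigma$ is a function of $c^{(\ell+1)}_\sigma$. By iteration, $c^{(L)}_\sigma$ determines $c^{(\ell)}_\sigma$ for every $\ell\le L$. It therefore suffices to prove the claim at $L=k$.

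The base case $k=0$ is immediate: $c^{(0)}_{\{v\}}=(0,\vx_v)$ contains the coordinate. For the inductive step, let $\sigma$ be a $k$-simplex with $k\ge1$. Injectivity of $\mathrm{HASH}$ lets us read off from $c^{(k)}_\sigma$ the multiset $\{\!\{c^{(k-1)}_\tau:\tau\in\partial\sigma\}\!\}$. Each $\tau\in\partial\sigma$ is a $(k-1)$-simplex, so by the induction hypothesis $c^{(k-1)}_\tau$ determines the vertex coordinate set of $\tau$. Hence we obtain the multiset of the $k+1$ sets $\{\vx_v : v\in\sigma\setminus\{w\}\}$, one for each $w\in\sigma$, and their union is $\{\vx_{v_0},\dots,\vx_{v_k}\}$. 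For $k=1$ the faces are the two vertices, whose coordinates are read off directly.

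The point needing care is that the decoding map must be a single function of the color, uniform over all simplices. In particular, it must not depend on knowing which simplex we are looking at, and the dimension $k$ must itself be recoverable. The dimension can be recovered by tracing $c^{(L)}_\sigma$ back to $c^{(0)}_\sigma$, whose first entry is $k$. Uniformity then follows because the induction defines $F_{k,L}$ explicitly as the composition of HASH-inversion, $F_{k-1,L-1}$ applied elementwise, and a union. Injectivity of $\vx$ matters at this stage. The union of coordinate sets equals the coordinate set of $\sigma$ in any case, but injectivity guarantees that this set has exactly $k+1$ elements. This makes it the faithful image of the vertex set, so that the subsequent entry-time computation, which is a maximum over vertices, is well defined from the set. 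No other obstacle is expected.

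Finally, $t_\nu(\sigma)=\max_{u\in\sigma}\vx_u\cdot\nu$ depends only on the set $\{\vx_u : u\in\sigma\}$. It is therefore determined by $c^{(L)}_\sigma$ for every $\nu\in\Sd$.
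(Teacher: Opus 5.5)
Your proof is correct and follows the paper's argument. Both proceed by induction on $k$, use injectivity of $\mathrm{HASH}$ to extract the facet-color multiset, and take the union of the recovered facet coordinate sets. The paper carries general $L\ge k$ through the induction and applies the hypothesis at $L-1$, whereas you first reduce to $L=k$ via monotonicity of refinement; your explicit remarks on a uniform decoder and on recovering the dimension from $c^{(0)}$ spell out what the paper leaves implicit.
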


\begin{figure}[h]
    \centering
    \includegraphics[width=0.725\textwidth]{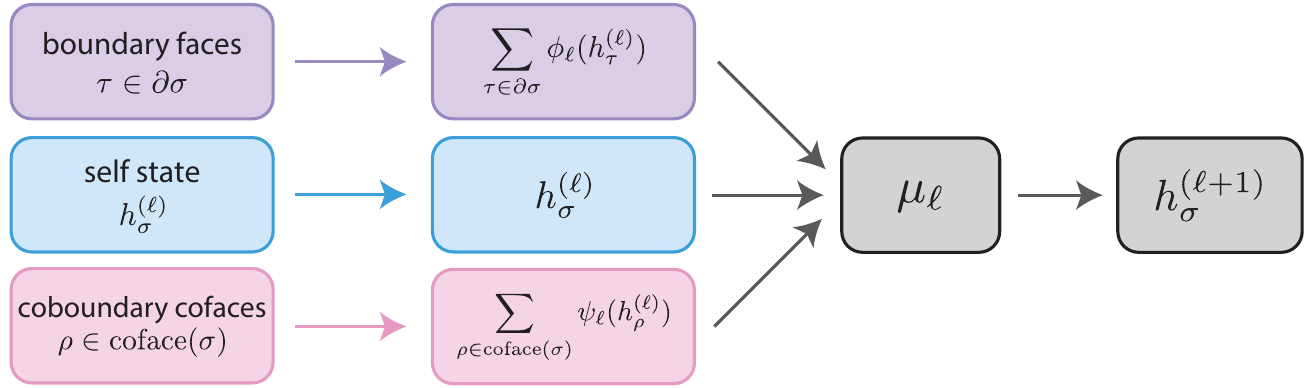}
    \caption{Simplicial message passing update for a target simplex $\sigma$. 
Boundary and coboundary messages are aggregated separately and combined with the current state of $\sigma$.
    }
    \label{fig:architecture}
\end{figure}
 
\section{A geometry-aware simplicial architecture and its expressivity}
\label{sec:theory}

We now introduce a message-passing architecture matching the boundary/coboundary structure of GSWL. 
We prove that it cannot exceed GSWL in expressivity, but can match GSWL on any fixed finite family. 
Each simplex is updated by
\begin{equation}\label{eq:presum}
h_\sigma^{(\ell+1)} = \mu_\ell\!\left(h_\sigma^{(\ell)},\; \sum_{\tau \in \partial \sigma} \phi_\ell(h_\tau^{(\ell)}),\; \sum_{\rho \in \coface(\sigma)} \psi_\ell(h_\rho^{(\ell)})\right),
\end{equation}
where $\tau$ ranges over the boundary faces of $\sigma$, $\rho$ ranges over the cofaces of $\sigma$, and $\mu_\ell$, $\phi_\ell$, $\psi_\ell$ are \emph{learnable maps}, with $\phi_\ell$ and $\psi_\ell$ applied before summation.
Figure~\ref{fig:architecture} illustrates this update where it separately aggregates messages from boundary faces and coboundary cofaces before combining them with the current state of $\sigma$.
We initialize this  architecture with $h_\sigma^{(0)} = E(c_\sigma^{(0)})$ for a deterministic encoder $E$.
For instance, we can set $E(\dim\sigma,f) \coloneqq W_{\dim\sigma}\,f+b_{\dim\sigma}$, i.e., a  per-dimension linear map  that acts on the coordinate-derived features introduced in Section~\ref{sec:background}~(vertex coordinates, edge midpoint and length, triangle centroid and area).
We obtain the following \emph{upper bound} for the expressivity of the architecture.
\begin{restatable}[Upper bound]{theorem}{upperbound}
\label{thm:upper}
Fix $L\ge 0$.  Suppose $K_1\equiv_L K_2$ under GSWL.  Then for every
choice of parameters in the architecture~\eqref{eq:presum} and every
readout of the form
$
z(K) \;=\; \sum_{\sigma\in K} \eta_{\dim\sigma}\, \Psi\!\left(h^{(L)}_\sigma\right)
$,
where $\eta_k$ are fixed per-dimension weights and $\Psi$ is any
deterministic map, we have $z(K_1)=z(K_2)$.
\end{restatable}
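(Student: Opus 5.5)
The plan is the standard WL-domination argument, adapted to the boundary/coboundary structure. The central claim, proved by induction on $\ell$, is that for every $\ell\le L$ there is a function $f_\ell$ (depending only on the fixed parameters $E,\mu_0,\phi_0,\psi_0,\dots$) such that $h^{(\ell)}_\sigma = f_\ell(c^{(\ell)}_\sigma)$ for every simplex $\sigma$ of every embedded complex. Crucially, the same $f_\ell$ must serve both $K_1$ and $K_2$. In words: the network state of a simplex is a function of its GSWL color, uniformly across complexes.

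For the base case, $h^{(0)}_\sigma = E(c^{(0)}_\sigma)$, so we take $f_0 := E$; this works because $E$ is deterministic.

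For the inductive step, assume $h^{(\ell)} = f_\ell(c^{(\ell)})$. By injectivity of $\mathrm{HASH}$, the color $c^{(\ell+1)}_\sigma$ determines three things: the color $c^{(\ell)}_\sigma$, the multiset $\{\!\{c^{(\ell)}_\tau:\tau\in\partial\sigma\}\!\}$, and the multiset $\{\!\{c^{(\ell)}_\rho:\rho\in\coface(\sigma)\}\!\}$. The boundary sum in \eqref{eq:presum} equals $\sum_{c} m(c)\,\phi_\ell(f_\ell(c))$, where $m(c)$ is the multiplicity of $c$ in the boundary multiset. This is a function of that multiset alone, and the same holds for the coboundary sum via $\psi_\ell\circ f_\ell$. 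Hence $h^{(\ell+1)}_\sigma$ is a function $f_{\ell+1}$ of $c^{(\ell+1)}_\sigma$, obtained by composing the decoding of $\mathrm{HASH}$ with $\mu_\ell$. Defining $f_{\ell+1}$ on colors outside the image of $\mathrm{HASH}$ is unnecessary, since the function only needs to be defined on colors that actually occur. The empty-multiset convention takes care of vertices and top-dimensional simplices, where the corresponding sum is $0$.

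For the readout, note that $\dim\sigma$ is recoverable from $c^{(L)}_\sigma$. For $L=0$ it is the first component of the color. For $L>0$ it is obtained by repeatedly decoding $\mathrm{HASH}$ down to $c^{(0)}_\sigma$. Thus each term $\eta_{\dim\sigma}\Psi(h^{(L)}_\sigma)$ equals $g(c^{(L)}_\sigma)$ for a single function $g$. It follows that $z(K)=\sum_{c} M_K(c)\,g(c)$, where $M_K$ is the multiplicity function of the multiset $\{\!\{c^{(L)}_\sigma:\sigma\in K\}\!\}$. The hypothesis $K_1\equiv_L K_2$ says $M_{K_1}=M_{K_2}$, so $z(K_1)=z(K_2)$.

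The main point requiring care, rather than a real obstacle, is the uniformity of $f_\ell$ across $K_1$ and $K_2$. This is what lets equal multisets of colors yield equal multisets of states. It holds because the parameters and $\mathrm{HASH}$ are fixed globally and $f_\ell$ is built only from them, never from the particular complex.
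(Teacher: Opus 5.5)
Your proposal is correct and follows essentially the same route as the paper. The paper also proves by induction that $h^{(\ell)}_\sigma = F_\ell(c^{(\ell)}_\sigma)$ for a single complex-independent map, starting from $F_0 = E$ and using injectivity of $\mathrm{HASH}$ together with the fact that the summed messages depend only on the color multisets; it then recovers $\dim\sigma$ from $c^{(L)}_\sigma$ to handle the $\eta_{\dim\sigma}$ weights in the readout.
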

Vice versa, the architecture matches GSWL on finite families of simplicial complexes.
\begin{definition}[Finite interpolation property]\label{def:fip}
A function class $\mathcal{M}\subseteq\{f:\R^p\to\R^q\}$ has the \emph{finite interpolation property} if, for every finite set of input/output pairs $\{(\xi_i,y_i)\}_{i=1}^n\subset\R^p\times\R^q$ with the $\xi_i$ pairwise distinct, there exists $f\in\mathcal{M}$ with $f(\xi_i)=y_i$ for all $i$.
Standard MLP classes (feedforward networks with ReLU activation and at least one hidden layer of sufficient width) satisfy this property; cf.\ Xu et al.~\citep{xu2019powerful}.
\end{definition}
 
\begin{restatable}[Finite-family realizability]{theorem}{realizability}
\label{thm:lower}
Let $\mathcal{F}$ be a finite family of finite embedded simplicial complexes and fix $L\ge 0$.  For $\ell\in\{0,\dots,L\}$, let
$
\mathscr{C}_\ell \;:=\; \{c^{(\ell)}_\sigma : \sigma\in K,\ K\in\mathcal{F}\}$, $m_\ell := |\mathscr{C}_\ell|$, and
$m := \max_{0\le\ell\le L} m_\ell$.
Assume the function class used for $E,\phi_\ell,\psi_\ell,\mu_\ell,\Psi$ satisfies the finite interpolation property of Definition~\ref{def:fip}.
Then, for any hidden dimension $H \ge 3m$, there exist parameters in the architecture~\eqref{eq:presum} and a readout $z(K)=\sum_\sigma \Psi(h^{(L)}_\sigma)$ such that $z(K_1)=z(K_2)\iff K_1\equiv_L K_2$ for all $K_1,K_2\in\mathcal{F}$.
\end{restatable}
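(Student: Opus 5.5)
The plan is to build, by induction on $\ell$, parameters for which the hidden state $h^{(\ell)}_\sigma$ is an injective encoding of $c^{(\ell)}_\sigma$ on the finite color set $\mathscr{C}_\ell$. Concretely, enumerate $\mathscr{C}_\ell=\{\gamma_1,\dots,\gamma_{m_\ell}\}$ and aim for $h^{(\ell)}_\sigma=e_{j}\in\R^H$ (a standard basis vector, padded with zeros) whenever $c^{(\ell)}_\sigma=\gamma_j$. At $\ell=0$, the inputs to $E$ are the finitely many distinct initial colors $c^{(0)}_\sigma$ (viewed as vectors, with the dimension tag included), so the finite interpolation property gives an $E$ sending each to its one-hot code. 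The hidden dimension $H\ge 3m$ leaves room for three blocks of $m$ coordinates, which is what the update step will use.

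For the inductive step, assume $h^{(\ell)}_\sigma=e_{j(\sigma)}$ encodes $c^{(\ell)}_\sigma$. Choose $\phi_\ell$ to send $e_j\mapsto e_{m+j}$ and $\psi_\ell$ to send $e_j\mapsto e_{2m+j}$ (interpolation on the at most $m$ distinct inputs). Then $\sum_{\tau\in\partial\sigma}\phi_\ell(h_\tau)$ is the count vector of the multiset $\{\!\{c^{(\ell)}_\tau\}\!\}$ placed in the second block, and similarly the coboundary sum gives the count vector of the coface multiset in the third block. Since one-hot sums are injective on multisets (counts are recovered coordinatewise), the triple fed to $\mu_\ell$ determines and is determined by $(c^{(\ell)}_\sigma,\{\!\{c_\tau\}\!\},\{\!\{c_\rho\}\!\})$. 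Because HASH is injective, $c^{(\ell+1)}_\sigma$ is a bijective function of this triple on the realized inputs. So there are exactly $m_{\ell+1}$ distinct input triples, and interpolation chooses $\mu_\ell$ mapping each to the one-hot code of the corresponding color in $\mathscr{C}_{\ell+1}$. Because $\mu_\ell$ is a function, equal triples get equal outputs, so only consistency is needed, and HASH-injectivity supplies it. (Empty multisets give the zero vector, which is consistent with the convention on HASH.)

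For the readout, take $\Psi$ to be the identity on the one-hot codes (or interpolate $e_j\mapsto e_j$). Then $z(K)=\sum_\sigma h^{(L)}_\sigma$ is exactly the histogram of $\{\!\{c^{(L)}_\sigma:\sigma\in K\}\!\}$ over $\mathscr{C}_L$. Hence $z(K_1)=z(K_2)$ if and only if the color multisets agree, i.e.\ $K_1\equiv_L K_2$. The "$\Leftarrow$" direction also follows from Theorem~\ref{thm:upper} with $\eta\equiv 1$.

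The main obstacle is bookkeeping rather than depth. First, the interpolation property is stated for maps $\R^p\to\R^q$ with distinct inputs, so $\mu_\ell$ must be applied to the concatenation in $\R^{3H}$ and the distinctness argument above is needed. Second, the color sets must be indexed per level, with the $3m$ blocks sized by $m=\max_\ell m_\ell$ so that one fixed $H$ works at every layer. I would check carefully that the count vectors are exact, with no collisions, because $\phi_\ell,\psi_\ell$ produce exact basis vectors on the finite set.
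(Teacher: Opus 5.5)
Your proposal is correct and is essentially the paper's construction: one-hot codes of the round-$\ell$ colors in a first block, $\phi_\ell,\psi_\ell$ shifting them into second and third blocks so that the boundary and coboundary sums become multiplicity vectors, an interpolated $\mu_\ell$ writing the one-hot code of $c^{(\ell+1)}_\sigma$ back into the first block, and a histogram readout. One minor correction: $\mu_\ell$ is well defined because the hidden triple determines the color triple, which you do show; injectivity of $\mathrm{HASH}$ is not what supplies this, and it only adds the incidental fact that there are exactly $m_{\ell+1}$ distinct triples.
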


\begin{proof}[Proof sketch]
Fix three disjoint coordinate blocks $A,B,C\subset\{1,\dots,H\}$ of size $m$ each, and encode each GSWL color at round $\ell$ as a distinct basis vector $a_j\in A$.  
Choose $\phi_\ell,\psi_\ell$ to move $A$-supported basis vectors to $B$ and $C$; boundary and coboundary sums then realize the multiplicity vectors of the color multisets in blocks $B$ and $C$.  
Choose $\mu_\ell$ to read the resulting triple and write a fresh basis vector back into $A$, overwriting the previous layer's state. 
After $L$ rounds, states are injective in observed GSWL colors; the readout $\Psi$ mapping $a_j\mapsto e_j$ produces the round-$L$ color histogram, whose equality on $K_1,K_2$ is exactly $K_1\equiv_L K_2$.  
See Appendix~\ref{app:proofs} for the full construction.
\end{proof}

\begin{remark}
The bound $H \ge 3m$ uses three disjoint coordinate blocks that are reused across layers.
The dependence of $H$ on $\mathcal{F}$ is inherent to basis-encoding realizability arguments of this kind, as in the standard WL/GIN construction of Xu et al.~\citep{xu2019powerful}.
For the ECT connection in Section~\ref{sec:ect}, the key consequence is simplex-wise injectivity: $h^{(L)}_\sigma$ is injective in the round-$L$ GSWL color of $\sigma$. 
Together with Lemma~\ref{lem:coord-recovery}, this means that for $L\ge\dim\sigma$, the hidden state of $\sigma$ determines its entry time $t_\nu(\sigma)$ in every direction $\nu$, so a simplex-wise readout can recover the indicator contributions whose signed sum gives sampled ECT values.
\end{remark}
 
\section{Connection to the Euler characteristic transform}\label{sec:ect}

The bounds in Section~\ref{sec:theory} characterize expressivity in terms of GSWL, but GSWL is a hashing procedure, \emph{not} a geometric invariant. 
To give the characterization geometric content, we connect GSWL to the Euler Characteristic Transform.
It is a complete invariant, which means realizing it is equivalent, up to sampling and approximation, to distinguishing \emph{all} embedded complexes.
In this section, we combine the finite-family lower bound (Theorem~\ref{thm:lower}) with the coordinate recovery results~(Lemma~\ref{lem:coord-recovery}) to show that our proposed architecture can compute sampled ECT values exactly on finite families, and approximate the full ECT on bounded classes.

\subsection{Exact realization on finite families}

\begin{restatable}[Exact sampled ECT on finite families]{theorem}{exactectfinite}
\label{thm:ect}
Let $\mathcal{F}$ be a finite family of embedded simplicial complexes $(K,\vx)$ of dimension at most $D$, fix finite sets $\mathcal{V}\subset S^{d-1}$ and $T\subset\R$, and let $L\ge D$.  
There exist parameters in the architecture~\eqref{eq:presum} and a simplex-wise map $\Psi:\R^H\to\R^{|\mathcal{V}|\cdot|T|}$ such that, for every $(K,\vx)\in\mathcal F$,
\begin{equation}
\sum_{\sigma\in K}(-1)^{\dim\sigma}\Psi(h^{(L)}_\sigma)=
\bigl(\ECT(K,\vx)(\nu,t)\bigr)_{(\nu,t)\in\mathcal V\times T}.
\end{equation}
\end{restatable}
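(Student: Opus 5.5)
The plan is to combine the simplex-wise injectivity from the construction in Theorem~\ref{thm:lower} with Lemma~\ref{lem:coord-recovery}, and then choose $\Psi$ by finite interpolation.

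\textbf{Step 1: simplex-wise injective states.} Apply the construction in the proof of Theorem~\ref{thm:lower} to the finite family $\mathcal F$ at depth $L$. Only the message-passing part is used, not its histogram readout. This gives parameters with $h^{(L)}_\sigma = E_L(c^{(L)}_\sigma)$ for a map $E_L$ that is injective on the finite set $\mathscr C_L$ of observed round-$L$ colors. Concretely, each observed color is sent to a distinct basis vector in block $A$. Hence for simplices $\sigma\in K$, $\sigma'\in K'$ with $K,K'\in\mathcal F$, we have $h^{(L)}_\sigma=h^{(L)}_{\sigma'}$ if and only if $c^{(L)}_\sigma=c^{(L)}_{\sigma'}$.

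\textbf{Step 2: colors determine entry times.} Every simplex satisfies $\dim\sigma\le D\le L$, so Lemma~\ref{lem:coord-recovery} applies. It shows that $c^{(L)}_\sigma$ determines the vertex coordinate set of $\sigma$, and therefore $t_\nu(\sigma)$ for each $\nu\in\mathcal V$. So there is a well-defined function $g$ on $\mathscr C_L$ with
\[
g(c) = \bigl(\1\{t_\nu(\sigma)\le t\}\bigr)_{(\nu,t)\in\mathcal V\times T},
\]
for any $\sigma$ of color $c$.

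One subtlety is that Lemma~\ref{lem:coord-recovery} is stated inside a single complex. The recovery must also be consistent across complexes, i.e.\ equal colors in different members of $\mathcal F$ should give equal coordinate sets. This holds because the induction in the lemma is uniform. Vertex colors contain $\vx_v$ in the same initial encoding for every complex. At step $k$, the color of a $k$-simplex injectively encodes the multiset of its faces' round-$(k-1)$ colors, and these determine the face coordinate sets, whose union is the vertex set. Since HASH is a single fixed injective function across complexes, the decoding map does not depend on $K$.

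\textbf{Step 3: readout and summation.} Define $\Psi$ on the finite set $E_L(\mathscr C_L)$ by $\Psi(E_L(c)) := g(c)$. This is well-defined because $E_L$ is injective on $\mathscr C_L$. Extend $\Psi$ arbitrarily, or by the finite interpolation property so that $\Psi$ lies in the admissible class. For instance, in the basis encoding $\Psi$ can be linear: it maps $a_j$ to $g(c_j)$. Then for every $(K,\vx)\in\mathcal F$,
\[
\sum_{\sigma\in K}(-1)^{\dim\sigma}\Psi(h^{(L)}_\sigma)=\Bigl(\sum_{\sigma\in K}(-1)^{\dim\sigma}\1\{t_\nu(\sigma)\le t\}\Bigr)_{(\nu,t)},
\]
which is the ECT definition evaluated on $\mathcal V\times T$.

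\textbf{Main obstacle.} The expected difficulty is Step 2's cross-complex consistency. The choice $\Phi_k\equiv\emptyset$ gives no information initially, so the argument must check that the upward induction through boundaries suffices. It should, because a $k$-simplex has $k+1\ge2$ faces for $k\ge1$, and the union of their vertex sets is exactly its vertex set. The remaining steps are bookkeeping.
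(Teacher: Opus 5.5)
Your proposal is correct and follows essentially the same route as the paper. Both arguments use the construction from Theorem~\ref{thm:lower} to make $h^{(L)}_\sigma$ injective in the observed round-$L$ color, then invoke Lemma~\ref{lem:coord-recovery} (valid since $L\ge D$) so that the indicator vector is a function of that color, realize it with $\Psi$ by finite interpolation, and sum with Euler signs. Your check that the decoding in Lemma~\ref{lem:coord-recovery} is uniform across members of $\mathcal F$ (because $\mathrm{HASH}$ and the initial encoding are fixed globally), and your remark that $\Psi$ can be taken linear on block $A$, are refinements the paper leaves implicit.
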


The theorem holds for arbitrary finite choices of directions $\mathcal{V}$ and thresholds $T$, and thus recovers the ECT exactly at the queried grid points. 
Since each directional slice $t\mapsto \ECT(K,\vx)(\nu,t)$ is a piecewise-constant function with jumps only at the finitely many entry times ${t_\nu(\sigma): \sigma \in K}$, a grid $T$ containing all such values across $\mathcal{F}$ suffices to reconstruct the full step function for each direction. 
The depth requirement $L \ge D$ matches the dimension of the complex; for example, $L=2$ suffices for triangle meshes. 
More generally, increasing depth expands the portion of the Hasse diagram from which coordinate information can be aggregated, which is consistent with the monotonic improvements observed in our depth ablation experiments (Section~\ref{sec:experiments}).

\subsection{Approximation on bounded classes via ECT stability}

Theorem~\ref{thm:ect} provides exact computation on finite families. 
To extend to \emph{infinite} classes, we rely on a stability result for the ECT under perturbations of the embedding.
\citet{george2025stability} define a distance between the ECTs of two embeddings $\vx, \vx'$ of the same abstract complex $K$, i.e.,
\begin{equation}
d_{\ECT}\bigl(\ECT(K, \vx),\, \ECT(K, \vx')\bigr)
:= \int_{\nu \in S^{d-1}}
\|\mathrm{ECC}_\nu(K, \vx) - \mathrm{ECC}_\nu(K, \vx')\|_1 \, d\nu,
\end{equation}
and prove the following bound.

\begin{theorem}[{\citet[Theorem~4.1]{george2025stability}}]
\label{thm:stability}
Let $K$ be a finite abstract simplicial complex and let $\vx, \vx': \vertK(K) \to \mathbb{R}^d$ be two embeddings. Then
\[
d_{\ECT}\bigl(\ECT(K, \vx),\, \ECT(K, \vx')\bigr)
\;\leq\; 2\, C_K\, C_d \sum_{v \in \vertK(K)} \|\vx(v) - \vx'(v)\|_2,
\]
where $C_K$ depends on the combinatorial structure of $K$ and $C_d$ depends on the ambient dimension $d$.
\end{theorem}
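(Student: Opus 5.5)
The bound can be proved directly from the simplex-wise formula for the Euler characteristic curve, without any topology beyond the definition of $\ECT$. Fix a direction $\nu\in S^{d-1}$ and write $\mathrm{ECC}_\nu(K,\vx)(t)=\sum_{\sigma\in K}(-1)^{\dim\sigma}\1\{t_\nu(\sigma)\le t\}$, with $t'_\nu(\sigma)$ the entry time under $\vx'$.

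\emph{Step 1: integrability.} For $t$ below all entry times under both embeddings, both curves vanish. For $t$ above all of them, both equal $\chi(K)$. So the difference has compact support in $t$, and its $L^1$ norm is finite.

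\emph{Step 2: per-simplex reduction.} By the triangle inequality,
\[
\|\mathrm{ECC}_\nu(K,\vx)-\mathrm{ECC}_\nu(K,\vx')\|_1\le\sum_{\sigma\in K}\int_{\R}\bigl|\1\{t_\nu(\sigma)\le t\}-\1\{t'_\nu(\sigma)\le t\}\bigr|\,dt .
\]
Each integrand is the indicator of the interval between the two entry times. Hence each integral equals $|t_\nu(\sigma)-t'_\nu(\sigma)|$.

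\emph{Step 3: Lipschitz bound on entry times.} The maximum is $1$-Lipschitz in the sup norm, so
\[
|t_\nu(\sigma)-t'_\nu(\sigma)|\le\max_{u\in\vertK(\sigma)}|(\vx_u-\vx'_u)\cdot\nu|\le\sum_{u\in\vertK(\sigma)}|(\vx_u-\vx'_u)\cdot\nu| .
\]
Summing over $\sigma$ and exchanging the order of summation, each vertex $u$ is counted once for every simplex containing it. Let $C_K:=\max_{u}|\{\sigma\in K: u\in\sigma\}|$, the maximal size of a closed star; this depends only on the combinatorics of $K$. Then
\[
\|\mathrm{ECC}_\nu(K,\vx)-\mathrm{ECC}_\nu(K,\vx')\|_1\le C_K\sum_{v}|(\vx_v-\vx'_v)\cdot\nu| .
\]

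\emph{Step 4: integrate over directions.} By rotation invariance of the surface measure, $\int_{S^{d-1}}|w\cdot\nu|\,d\nu=C_d\|w\|_2$ for every $w\in\R^d$. Here $C_d$ is the integral of $|\nu_1|$ over the sphere, which depends only on $d$. Integrating Step 3 over $\nu$ gives
\[
d_{\ECT}\le C_K C_d\sum_v\|\vx(v)-\vx'(v)\|_2 .
\]
This is the claimed bound, with the factor $2$ to spare. That factor presumably comes from how George et al.\ define their constants, or from bounding each interval indicator by the sum of two one-sided terms. So it does not need to be reproduced.

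The argument is essentially routine; the only points needing care are bookkeeping. First, one should confirm that $d_{\ECT}$ uses the unnormalized surface measure, or else adjust $C_d$ accordingly. Second, $C_K$ must be taken as a star-size count rather than the total number of simplices, which is the tighter of the two valid choices. Injectivity of the embeddings is never used, so the bound holds for arbitrary vertex maps.
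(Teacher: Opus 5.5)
Your argument is correct, and there is no proof in the paper to compare it against: the paper imports this statement as a black box from George et al.\ (their Theorem~4.1) and only uses it inside the proof of Theorem~\ref{thm:approx}(b).

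Each of your steps checks out:
\begin{itemize}
\item the triangle inequality over the signed simplex-wise sum;
\item the identity $\int_\R|\1\{a\le t\}-\1\{b\le t\}|\,dt=|a-b|$;
\item the $1$-Lipschitz property of $\max$;
\item the exchange of summation that produces the closed-star count;
\item the rotation-invariance identity $\int_{S^{d-1}}|w\cdot\nu|\,d\nu=C_d\|w\|_2$.
\end{itemize}
The statement as written leaves $C_K$ and $C_d$ unspecified. Your bound $C_KC_d\sum_v\|\vx(v)-\vx'(v)\|_2$ therefore implies it.

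Compared with the citation, your route gains three things. The constants are explicit. The estimate follows directly from the simplex-wise definition of $\ECT$ used in this paper, with no appeal to geometric realizations. And it holds for arbitrary vertex maps, not just injective ones. That last point is what Theorem~\ref{thm:approx}(b) actually needs. There the bound is applied to the MLP output $\hat{\vx}$, and a single uniform $\delta$ need not keep $\hat{\vx}$ injective for every $\vx\in\mathcal{C}$.

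On the factor $2$: your $C_d=\int_{S^{d-1}}|\nu_1|\,d\nu$ equals $2\,\mathrm{vol}(B^{d-1})$. So the factor plausibly arises in the cited statement if $C_d$ is normalized as the volume of the unit $(d-1)$-ball. If you need George et al.'s specific constants, rather than the existence of some constants as stated here, you would have to check their definitions of $C_K$ and $C_d$. Your derivation alone does not establish that match.
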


This yields a clean approximation result under the following boundedness assumption, which is readily satisfied by essentially any real-world dataset.

\begin{assumption}[Bounded embeddings]\label{ass:bounded}
    Fix a finite abstract simplicial complex $K$ and a class $\mathcal{C}$ of embeddings $\vx: \vertK(K)\to\R^d$ such that every $\vx \in \mathcal{C}$ has image in a common compact set $B\subset\R^d$.
\end{assumption}

The realizability construction of Theorem~\ref{thm:lower} encodes GSWL colors as distinct basis vectors, which is injective but not continuous in the vertex coordinates. 
To leverage the stability theorem, we therefore consider a variant architecture with a vertex-coordinate skip channel,
\begin{equation}
\label{eq:skip}
h^{(\ell+1)}_v = \bigl[\, \vx_v \,\big\|\, g^{(\ell+1)}_v\,\bigr],
\end{equation}
where $g^{(\ell+1)}_v$ is produced by~\eqref{eq:presum}. This preserves $\vx_v$ throughout all layers, so the map $h^{(L)}_v \mapsto \vx_v$ is simply a projection onto the first $d$ coordinates and therefore continuous.

\begin{restatable}[Approximation of the full ECT]{theorem}{approximationfullect}
\label{thm:approx}
Adopt Assumption~\ref{ass:bounded} and use the skip-channel architecture~\eqref{eq:skip} with $L\ge\dim(K)$. We then have:
\begin{compactenum}[(a)]
    \item \emph{Exact recovery.}  The linear projection $R\colon\R^{d+H'}\to\R^d$ onto the first $d$ coordinates yields $\hat{\vx}_v:=R(h^{(L)}_v)=\vx_v$ for every $\vx\in\mathcal{C}$, and hence $d_\ECT\bigl(\ECT(K,\vx),\ECT(K,\hat{\vx})\bigr)=0$.
    \item \emph{MLP recovery.}  If $R$ is required to lie in an MLP class with the universal approximation property, then for every $\varepsilon>0$ there exists $R$ such that $d_\ECT\bigl(\ECT(K,\vx),\ECT(K,\hat{\vx})\bigr)<\varepsilon$ uniformly over $\vx\in\mathcal{C}$.
\end{compactenum}
\end{restatable}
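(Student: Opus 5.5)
Part (a) is essentially immediate from the skip-channel architecture~\eqref{eq:skip}, so I will dispatch it first. By construction, every vertex state has the form $h^{(\ell)}_v=[\vx_v\,\|\,g^{(\ell)}_v]$ for every $\ell\ge 1$. For $\ell=0$, I will take the encoder so that the first $d$ coordinates of $h^{(0)}_v$ equal $\vx_v$, or simply note that the claim is only needed at $L\ge 1$. The case $L=0$, which forces $\dim K=0$, is trivial once the vertex encoder is chosen to be the identity on coordinates. Hence the projection $R$ onto the first $d$ coordinates returns $\hat\vx_v=\vx_v$ exactly, for every $\vx\in\mathcal C$ and every choice of the remaining parameters. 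Since $\hat\vx=\vx$, the two ECTs coincide as functions, so the $d_\ECT$ integrand vanishes identically and the distance is $0$. The condition $L\ge\dim(K)$ is not needed for (a). It matters only if one wants the same readout to also yield simplex-wise entry times as in Lemma~\ref{lem:coord-recovery}, and I will remark on this rather than use it.

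For part (b), I will chain universal approximation with the stability bound of Theorem~\ref{thm:stability}. Under Assumption~\ref{ass:bounded}, every $\vx_v$ lies in the compact set $B$. I will also need the states $h^{(L)}_v$ to range over a compact set $\mathcal H\subset\R^{d+H'}$. To get this, I will argue that $\vx\mapsto h^{(L)}_v$ is continuous when the maps $E,\phi_\ell,\psi_\ell,\mu_\ell$ are continuous, as MLPs are, and that $B^{\vertK(K)}$ is compact, so the image is compact. Alternatively, I can avoid this step by letting $R$ depend only on the first $d$ coordinates, which range over $B$. The target map $h\mapsto(h_1,\dots,h_d)$ is continuous on $\mathcal H$, so universal approximation supplies an MLP $R$ with $\sup_{h\in\mathcal H}\|R(h)-(h_1,\dots,h_d)\|_2<\delta$. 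It follows that $\|\hat\vx_v-\vx_v\|_2<\delta$ for all $v$ and all $\vx\in\mathcal C$. Theorem~\ref{thm:stability} applied to the pair $\vx,\hat\vx$, both maps $\vertK(K)\to\R^d$, gives
\[
d_\ECT\bigl(\ECT(K,\vx),\ECT(K,\hat\vx)\bigr)\le 2C_KC_d\,|\vertK(K)|\,\delta .
\]
Choosing $\delta<\varepsilon/(2C_KC_d|\vertK(K)|)$ then yields the uniform bound, since $K$ is fixed and the constants do not depend on $\vx$.

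The main obstacle I anticipate is a hypothesis mismatch: $\hat\vx$ need not be injective, whereas the paper defines embeddings as injective maps. I will handle this by checking that the stated stability bound, and the definition of $d_\ECT$ via Euler characteristic curves of the sublevel filtration, make sense for arbitrary coordinate assignments $\vertK(K)\to\R^d$. The theorem statement quantifies over maps, and the ECT formula is well defined for any vertex map. A second, minor point is ensuring that the MLP output dimension is $d$ and that the approximation is uniform over the compact domain. Both are standard consequences of the universal approximation property on compacta.
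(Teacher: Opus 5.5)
Your proposal is correct and follows the paper's proof essentially step for step. Part (a) is the skip-channel projection. Part (b) combines uniform MLP approximation of the coordinate projection on a compact set of vertex states with Theorem~\ref{thm:stability}, taking $\delta$ of order $\varepsilon/(2C_KC_d|\vertK(K)|)$.

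Your extra remarks make explicit points that the paper's proof passes over silently:
\begin{itemize}
\item You get compactness from continuity of the whole network on $B^{\vertK(K)}$, or avoid needing it by letting $R$ read only the first $d$ coordinates.
\item You note that the stability bound does not need injectivity, so it applies to a possibly non-injective $\hat{\vx}$.
\end{itemize}
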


This result extends Theorem~\ref{thm:ect} in two directions: It applies to infinite (bounded) embedding classes and approximates the full ECT function, rather than a finite sampled grid. The extension relies on the continuous coordinate-recovery property ensured by the skip-channel; without continuity, the stability bound cannot be applied.
As a consequence, we can show under which conditions distinct embeddings can be separated.
\begin{restatable}[Separation of embeddings]{corollary}{embeddingseparation}
\label{cor:sep}
Adopt Assumption~\ref{ass:bounded} and suppose the ECT is injective on $\mathcal{C}$ with
$
\inf_{\vx \ne \vx' \in \mathcal{C}} d_{\ECT}\bigl(\ECT(K, \vx), \ECT(K, \vx')\bigr)
\;=:\; \gamma > 0$.
Then for every $\varepsilon < \gamma/2$, the readout $R$ from Theorem~\ref{thm:approx} produces $\ECT(K, \hat{\vx})$ that distinguishes distinct embeddings. 
Namely, $\ECT(K, \hat{\vx}) \ne \ECT(K, \hat{\vx'})$ if $\vx \ne \vx'$.
\end{restatable}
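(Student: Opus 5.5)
The plan is a triangle inequality in $d_\ECT$, using the approximation guarantee of Theorem~\ref{thm:approx} with tolerance $\varepsilon<\gamma/2$. Fix $\vx\ne\vx'$ in $\mathcal{C}$, and let $\hat{\vx}$ and $\hat{\vx}'$ be the coordinates recovered by the readout $R$ from the hidden states $h^{(L)}_v$ computed on $(K,\vx)$ and $(K,\vx')$, respectively. The readout is the same for both inputs.

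First I would note that $d_\ECT$ as defined above is a pseudometric on ECTs of embeddings of the fixed complex $K$. Each $\mathrm{ECC}_\nu$ is a bounded, piecewise-constant function of $t$ with compact support of jumps. Hence the $L^1$ norm of differences is finite, symmetric, and satisfies the triangle inequality pointwise in $\nu$. Integrating over $S^{d-1}$ preserves these properties. One caveat needs care: the $L^1$ distance between two ECCs with different Euler characteristics at $+\infty$ is infinite unless one restricts $t$ to a bounded window. Under Assumption~\ref{ass:bounded}, however, all entry times lie in $[-\max_{B}\|\cdot\|,\max_B\|\cdot\|]$, and the ECCs agree outside this interval, since both equal $0$ below and $\chi(K)$ above. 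So everything is finite.

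Next I would invoke Theorem~\ref{thm:approx}. In case (a), $\hat{\vx}=\vx$ and $\hat{\vx}'=\vx'$ exactly, so $\ECT(K,\hat{\vx})\ne\ECT(K,\hat{\vx}')$ follows directly from the assumed injectivity of the ECT on $\mathcal{C}$. In case (b), choose $R$ so that $d_\ECT(\ECT(K,\vx),\ECT(K,\hat{\vx}))<\varepsilon$ uniformly over $\mathcal{C}$. Uniformity matters because the same $R$ must serve both $\vx$ and $\vx'$. The triangle inequality then gives
\[
\gamma\le d_\ECT\bigl(\ECT(K,\vx),\ECT(K,\vx')\bigr)< 2\varepsilon + d_\ECT\bigl(\ECT(K,\hat{\vx}),\ECT(K,\hat{\vx}')\bigr).
\]
This yields $d_\ECT(\ECT(K,\hat{\vx}),\ECT(K,\hat{\vx}'))>\gamma-2\varepsilon>0$, so the two recovered ECTs differ.

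The main obstacle is making sure $d_\ECT$ is applicable to the approximate coordinates $\hat{\vx}$. The MLP output need not be injective or lie in $B$, so $\hat{\vx}$ may fail to be an embedding in the paper's sense. I would handle this as follows. First, $d_\ECT$ and the ECT formula make sense for any vertex map, injective or not. Second, the boundedness needed for finiteness follows because $R$ is continuous on the compact set of hidden states, so its image is bounded. Third, the definition of $\varepsilon$-approximation in Theorem~\ref{thm:approx}(b) already presupposes exactly this extended setting.
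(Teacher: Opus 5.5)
Your proposal is correct and follows essentially the same route as the paper. Both use the three-term triangle inequality $\ECT(K,\vx)\to\ECT(K,\hat{\vx})\to\ECT(K,\hat{\vx}')\to\ECT(K,\vx')$, with the uniform $\varepsilon$-bound from Theorem~\ref{thm:approx}(b) applied to both embeddings, to get $d_{\ECT}\bigl(\ECT(K,\hat{\vx}),\ECT(K,\hat{\vx}')\bigr)\ge\gamma-2\varepsilon>0$. Your extra checks are points the paper leaves implicit and do not change the argument: finiteness and the pseudometric property of $d_{\ECT}$, the case~(a) shortcut via injectivity, and the possibility that $\hat{\vx}$ is not an embedding.
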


Positive separation is given for any finite $\mathcal{C}$ on which the ECT is injective.  
For infinite $\mathcal{C}$, it is a substantive margin assumption, which does \emph{not} follow from compactness and injectivity, since a continuous injective map on a non-discrete compact set typically has infimum-zero pairwise distances along sequences $\vx_n\to\vx$.  
Sufficient conditions therefore require either $\mathcal{C}$ to be discrete~(e.g., a finite collection or a fixed $\delta$-net of representative embeddings) or an explicit margin to be imposed.

\begin{remark}
Assumption~\ref{ass:bounded} fixes a single abstract complex $K$ because the stability theorem (Theorem~\ref{thm:stability}) only bounds ECT distance between embeddings of the same $K$.
For a class of embedded complexes with varying abstract structure, Theorem~\ref{thm:approx} can be applied within each abstract-complex orbit, getting an approximation on each orbit. 
Since the ECT is a complete invariant, distinct embedded complexes are always separated in $d_\ECT$, whether they differ in abstract structure or only in embedding. 
Choosing the per-orbit approximation error $\varepsilon$ smaller than half the minimum ECT gap across the full mixed class then results in separation throughout.
\end{remark}
 
\section{Experiments}\label{sec:experiments}

We evaluate the theoretical predictions from Sections~\ref{sec:gswl}--\ref{sec:ect} on three datasets of increasing complexity: synthetic triangulations with controlled deformations, manifold triangulations from MANTRA~\citep{mantra2025} with synthetic embeddings, and registered human body meshes from FAUST~\citep{bogo2014faust}. 
Five architectures are compared: Combinatorial SMP (constant initial features, neural analogue of SWL), Geometry-Aware SMP (vertex coordinates with coordinate-derived edge and triangle features), GCN and GIN on the 1-skeleton with vertex coordinates~\citep{kipf2017semi,xu2019powerful}, and a DeepSets~\citep{zaheer2017deepsets} baseline (permutation-invariant MLP on vertex coordinates, no topology).
Implementation details are given in Appendix~\ref{app:exp_details}.

\subsection{Geometric information is necessary}
\label{sec:Geometric Information}

We first verify the lower bound prediction: On datasets where all samples share the \emph{same} abstract complex, combinatorial models~\citep{bodnar2021weisfeiler, bodnar21weisfeilerb} are unable to learn and yield \emph{identical} outputs.
To demonstrate this lack of expressivity, we first device an experiment for classifying \emph{deformations}.
We deform a Delaunay triangulation ($V\!=\!40$, $E\!=\!107$, $T\!=\!68$) via four smooth map families~(bend, twist, stretch, random smooth), producing 300 samples for a balanced 4-class classification problem (chance $= 0.25$).  All samples share identical connectivity. 
As Table~\ref{stab:deformation-classification} shows, combinatorial simplicial message passing remains at essentially random guessing , confirming that the model produces \emph{identical} representations across all inputs, so optimization cannot distinguish between classes. 
By contrast, our geometry-aware simplicial message passing scheme is capable of recovering all deformations; zeroing all coordinate input in this architecture yields random-level performance, as expected.
Following this, to connect directly to Theorem~\ref{thm:ect}, we regress sampled ECT vectors (8 directions $\times$ 10 thresholds $= 80$ dimensions) from vertex coordinates on the same triangulation. 
Table~\ref{stab:synthetic-meshes} reports test MSE. 
Our geometry-aware model achieves a $3\times$ reduction over the combinatorial baseline. 
A depth ablation shows monotonic improvement from $L\!=\!1$(MSE $0.121$) to $L\!=\!6$ (MSE $0.099$), consistent with the coordinate recovery lemma, i.e., deeper networks access finer ECT grids.

\begin{table}[btp]
\centering
\caption{Experiments on synthetic datasets. (a)~Deformation classification (random chance $= 0.25$) using a shared abstract complex. (b)~Regression of ECT vectors on synthetic meshes.
}
\label{tab:synthetic-experiments}
\small
\subcaptionbox{\label{stab:deformation-classification}}{
\begin{tabular}{lc}
\toprule
Model & Test Accuracy~($\uparrow$) \\
\midrule
MLP (flattened coordinates) & 0.987 \\
Combinatorial SMP & 0.240 \\
Geometry-Aware SMP (ours) & \textbf{1.000} \\
\bottomrule
\end{tabular}
}
\subcaptionbox{\label{stab:synthetic-meshes}}{\begin{tabular}{lc}
\toprule
Model & Test MSE~($\downarrow$)\\
\midrule
MLP (flattened coordinates) & 0.100 \\
Combinatorial SMP & 0.281 \\
Geometry-Aware SMP (ours) & \textbf{0.094} \\
\bottomrule
\end{tabular}
}
\end{table}

\subsection{Simplicial structure provides an inductive bias beyond graphs and point clouds}

We next ask whether the full simplicial adjacency structure (including triangles)
provides value \emph{beyond} graph-level message passing or unstructured point cloud
processing.
To this end, we use the FAUST dataset, which consists of 100 registered meshes of 10 human subjects in 10 poses, all sharing a template with $V\!=\!6890$ vertices. 
We perform 10-way pose classification with 5-fold cross-validation and $5\times$ augmentation (small rotations and Gaussian noise; see Appendix~\ref{app:exp_details}). Table~\ref{tab:faust} reports mean accuracy $\pm$ standard deviation across folds.
The results exhibit a monotone hierarchy. 
Combinatorial SMP collapses to a constant per-dimension output across all inputs~(all meshes in FAUST share the same abstract complex), so the network is stuck at the majority-class baseline across folds, yielding the degenerate 0.062 accuracy with zero variance reported in Table~\ref{tab:faust}. 
This is the expected consequence of Theorem~\ref{thm:upper} when applied to a dataset of constant abstract structure: The model produces the same representation for every input, so only class-frequency information is available to the readout and the network cannot break symmetry between classes. 
DeepSets and GCN perform comparably~($\sim\!0.74$), indicating that edge connectivity alone adds little information over unstructured coordinate access for this task. 
GIN, with its more expressive aggregation, reaches $0.80\pm 0.05$. 
Finally, geometry-aware SMP achieves $0.84\pm 0.08$, a small improvement whose error bars overlap with GIN; we therefore read the FAUST results as establishing a clear monotone trend from combinatorial models through point-cloud and graph baselines to simplicial message passing, without claiming a statistically separated gap between the top two models.  
A larger-scale evaluation on a dataset with more samples than FAUST's 100 would be needed to resolve this gap.
We note that specialized shape analysis methods achieve near-perfect accuracy on FAUST using task-specific geometric descriptors~\citep{bogo2014faust}; the present comparison is intended to isolate the contribution of each structural level, \emph{not} to compete on the benchmark.

\begin{table}[btp]
\centering
\caption{FAUST 10-way pose classification (chance $= 0.10$). Our geometry-aware architecture outperforms existing simplicial message-passing schemes.
}
\label{tab:faust}
\small
\begin{tabular}{l l S[table-format=1.3(3)]}
\toprule
Model & Structure & {Accuracy} \\
\midrule
DeepSets                    & vertex coordinates            & 0.750 +- 0.040 \\
GCN + coords               & vertices + edges              & 0.738 +- 0.073 \\
GIN + coords               & vertices + edges              & 0.800 +- 0.047 \\
Combinatorial SMP          & abstract complex              & 0.062 +- 0.000 \\
Geometry-Aware SMP (ours)  & vertices + edges + triangles  & \bfseries 0.838 +- 0.075 \\
\bottomrule
\end{tabular}
\end{table}

The previous experiment tests discrimination between coarse categories. 
We now test whether the same hierarchy of models appears on a \emph{regression} target.
To this end, in compound deformations of the synthetic triangulation, we regress a $30$-dimensional vector of geometric summary statistics, i.e., 
\begin{inparaenum}[(i)]
    \item vertex displacement norms~(moments of $\|\vx_v-\vx_v^{\text{base}}\|$),
    \item edge-length distributions~(moments and percentiles of $\{\|\vx_u-\vx_v\|:\{u,v\}\in E\}$),
    \item per-triangle area statistics, and
    \item angle defects ($K(v)=2\pi-\sum_{\text{incident}}\theta_i$). 
\end{inparaenum}
Our geometry-aware SMP model achieves MSE $0.349$ compared to $0.643$ for DeepSets and $1.104$ for Combinatorial SMP. 
The gap between SMP and DeepSets reflects the target's dependence on connectivity-derived quantities (e.g., angle defects, area distributions) that are \emph{inaccessible} to a point cloud model.
The gap from DeepSets to a geometry-aware  model shows its dependence on coordinates.

\subsection{Coboundary messages are necessary}

\begin{wraptable}{r}{0.45\textwidth}
\centering
\vspace{-\intextsep}
\caption{Per-vertex curvature prediction (test MSE, lower is better) \emph{with} and \emph{without} coboundary messages for different numbers of message-passing layers.}\label{tab:cobdy}\small \sisetup{
    table-format    = 1.2,
    round-mode      = places,
    round-precision = 2,
}\begin{tabular}{lSSS}
\toprule
Depth $L$ & {Full} & {Boundary-only} & {Gap}\\
\midrule
1 & 0.481 & 0.871 & 0.390 \\
4 & 0.242 & 0.847 & 0.605 \\
8 & 0.124 & 0.832 & 0.708 \\
\bottomrule
\end{tabular}
\vspace{-\intextsep}
\end{wraptable}
We verify that the coboundary channel (higher-dimensional simplices passing
information to lower-dimensional ones)  crucially contributes to geometric expressivity.
To this end, we aim to perform per-vertex curvature prediction on a mesh.
Our target is discrete Gaussian curvature $K(v) = 2\pi - \sum_{\text{incident}} \theta_i$, which depends on angles in incident triangles. 
Predicting this quantity at each vertex requires triangle-level information to flow downward through coboundary aggregation. 
Table~\ref{tab:cobdy} compares the full architecture (boundary + coboundary) against a boundary-only ablation across different network depths.
The full model improves monotonically with depth while the boundary-only variant remains flat at $\sim\!0.85$ regardless of depth, confirming that the relevant signal travels through the coboundary channel. 
On the simpler task of predicting the number of incident triangles per vertex, the full model achieves MSE $< 10^{-4}$ while a boundary-only model merely reaches an MSE $0.68$.

\subsection{Sanity checks and generalization}

Having discussed the necessity of all components, we perform several ``sanity checks'' and generalization experiments.
Concerning \emph{permutation equivariance},
under random vertex relabeling on the synthetic classification task, an MLP baseline on flattened coordinates drops from $1.00$ to $0.38$ test accuracy~(indicating memorization ), while our geometry-aware simplicial message passing maintains an accuracy $0.98$, confirming equivariance.
Next, we show that our architecture does not introduce any spurious signals by classifying the \emph{orientability} of a simplicial complexes from the MANTRA dataset~\citep{mantra2025}, using $400$ balanced samples, each with a \emph{distinct} abstract complex.
This dataset is notable for its absence of geometrical information; and indeed, we observe that both combinatorial and geometry-aware SMP achieve an accuracy of about $0.62$, with performance above chance being attributable to combinatorial correlates (e.g., Euler characteristic, simplex counts).  The geometry-aware model gains no advantage, confirming that it does not introduce spurious geometric signals.
Using the deformation classification task~(cf.\ Section~\ref{sec:Geometric Information}) and \emph{embedding} triangulations~(with 6--12 vertices) into $\R^2$ using a spectral layout, we observe that our geometry-aware model achieves near-perfect accuracy~(cf.\ Table~\ref{tab:mantra}).
Combinatorial SMP and DeepSets, by contrast, remain at random-level performance or vary with the topology of the underlying surface.
The consistent performance of our method across different topological types confirms that our advantage is \emph{not} an artifact of any particular triangulation.
Finally, on \emph{rotation-invariant regression targets}~(mesh extent, mean centroid distance, distance variance) with $5$-fold CV, our model achieves MSE $0.036 \pm 0.010$, compared to combinatorial SMP at $1.016 \pm 0.178$,  a $28\times$ reduction. 
DeepSets achieves $0.048 \pm 0.011$, matching our model, which is expected since the targets are permutation-invariant functions of vertex coordinates that do not depend on connectivity, so a topological inductive bias cannot provide additional benefits.

\begin{table}[btp]
\centering
\caption{Deformation classification on MANTRA triangulations that have been embedded into $\R^2$~(chance $= 0.25$). We observe consistent gains of our geometry-aware message-passing scheme.
}
\label{tab:mantra}
\small
\sisetup{
    table-format    = 1.2,
    round-mode      = places,
    round-precision = 2,
}
\begin{tabular}{lSSSS}
\toprule
 & {S$^2$} & {T$^2$} & {Klein bottle} & {RP$^2$} \\
\midrule
DeepSets & 0.420 & 0.620 & 0.640 & 0.800 \\
Combinatorial SMP & 0.200 & 0.220 & 0.200 & 0.200 \\
Geometry-Aware SMP (ours) & \bfseries 0.980 & \bfseries 1.000 & \bfseries 1.000 & \bfseries1.000 \\
\bottomrule
\end{tabular}
\end{table}

\subsection{Limitations}
\label{sec:limitations}

We identify some limitations in our experiments.
First, we find that all synthetic experiments saturate at nigh-perfect accuracy or near-zero MSE, which is partially due to the dataset size.
Thus, these experiments prove the existence of an expressivity gap but do not constitute a stress test of our architecture on highly-challenging tasks.
Our experiments on FAUST provide a more meaningful benchmark, even though the dataset size limits the statistical power.
Moreover,  our completeness argument hinges on the availability of sufficiently many directions and thresholds for the ECT. 
For finite complexes, the ECT is a constructible function with finitely many critical values, so a finite grid does suffice in principle. 
Finally, our baselines could also incorporate other point cloud classification models~\citep{qi2017pointnet, wang2019dynamic}. However, our main focus  lies on \emph{characterizing expressivity}, and we find that
our baselines already help in isolating the contribution of higher-order simplices, and support our claims.

\section{Conclusion}

We introduce the \emph{Geometric Simplicial Weisfeiler--Lehman}~(GSWL) test, a color refinement procedure for embedded simplicial complexes, and establish its relationship to geometry-aware simplicial message-passing architectures.
Our proposed architecture is bounded above by GSWL~(Theorem~\ref{thm:upper}), and matches GSWL exactly on finite families~(Theorem~\ref{thm:lower}). In fact, we can show that \emph{all} such message-passing schemes are bounded above by GSWL, and the class admits parameter choices that match GSWL exactly on any finite family.
Combining this  with simplex-wise coordinate recovery yields exact computation of sampled ECT values~(Theorem~\ref{thm:ect}), i.e., by relying solely on local message passing, we can reproduce a complete geometric invariant.
A recent result~\citep{george2025stability} further extends this to $\varepsilon$-approximation of the full ECT on bounded classes~(Theorem~\ref{thm:approx}). 
Since the ECT is \emph{complete} for embedded simplicial complexes, our results provide a \emph{geometric expressivity characterization} analogous to the WL--GNN equivalence for graphs.
We verify this theoretically and experimentally, observing that
on datasets with shared abstract connectivity, combinatorial models perform provably worse, whereas our geometry-aware model succeeds. 
Coboundary ablations confirm that higher-dimensional information flow is necessary for tasks where geometric matters.
As for future work, it would be interesting to extend the framework to function-valued ECT recovery~(i.e., outputting $\ECT(K)(v,t)$ for an arbitrary query $(v,t)$) and higher-dimensional complexes.
 
\begin{ack}
This work has received funding from the Swiss State Secretariat for
Education, Research, and Innovation~(SERI).
The authors declare no competing interests. The funders had no role in
the preparation of the manuscript or the decision to publish.
\end{ack}
 
\bibliographystyle{abbrvnat}
\bibliography{references}

\clearpage
\appendix
 
\section{Full proofs}\label{app:proofs}

\coordinaterecovery*

\begin{proof}
We prove this by induction on $k=\dim\sigma$, with $L\ge k$ as a parameter.

\emph{Base case ($k=0$).}  By injectivity of $\mathrm{HASH}$, $c^{(L)}_v$ determines $c^{(L-1)}_v$ and, iterating, $c^{(0)}_v=(0,\vx_v)$, hence it also determines $\vx_v$.

\emph{Inductive step.}  Suppose the claim holds for all $(k{-}1)$-simplices and all $L'\ge k-1$, and let $\sigma=\{v_0,\dots,v_k\}$ with $L\ge k$.  
Since $\mathrm{HASH}$ is injective, $c^{(L)}_\sigma$ determines the multiset $\{\!\{c^{(L-1)}_\tau:\tau\in\partial\sigma\}\!\}$.  
Each facet $\tau=\sigma\setminus\{v_i\}$ is a $(k{-}1)$-simplex and $L-1\ge k-1$, so by the inductive hypothesis $c^{(L-1)}_\tau$ determines $\{\vx_{v_0},\dots,\widehat{\vx_{v_i}},\dots,\vx_{v_k}\}$.  
Because $\vx$ is injective on $\vertK(K)$, the union of these $k+1$ facet vertex-coordinate sets equals $\{\vx_{v_0},\dots,\vx_{v_k}\}$ unambiguously.
\end{proof}

\upperbound*

\begin{proof}
By induction on $\ell$ there exists a deterministic map $F_\ell$ from $\ell$-round GSWL colors to hidden states such that $h^{(\ell)}_\sigma=F_\ell(c^{(\ell)}_\sigma)$.  
The base case is $F_0=E$.  
For the inductive step, injectivity of $\mathrm{HASH}$ ensures that the triple
$(c^{(\ell)}_\sigma,\{\!\{c^{(\ell)}_\tau\}\!\}_{\partial\sigma},
\{\!\{c^{(\ell)}_\rho\}\!\}_{\coface(\sigma)})$ is uniquely determined by $c^{(\ell+1)}_\sigma$.  
Because the architecture depends on neighbors only through their sums, and $F_\ell$ maps equal colors to equal hidden states, the update yields $h^{(\ell+1)}_\sigma = F_{\ell+1}(c^{(\ell+1)}_\sigma)$.

For the readout, the per-dimension weighting $\eta_{\dim\sigma}$ is compatible with the color-level argument because $\dim(\sigma)$ is the first component of $c^{(0)}_\sigma$ under Definition~\ref{def:gswl}, and is therefore determined by $c^{(\ell)}_\sigma$ for every $\ell$ (injectivity of $\mathrm{HASH}$ propagates the first component forward).  
GSWL-$L$ equivalence then implies equal multisets of $(\dim,h^{(L)})$ pairs, and so $z(K_1)=z(K_2)$.
\end{proof}

\realizability*
 
\begin{proof}
Fix bijections $\iota_\ell:\mathscr{C}_\ell\to\{1,\dots,m_\ell\}$ for each $\ell$, and partition $\{1,\dots,H\}$ into three disjoint blocks $A,B,C$ of size $m$ each, using $H\ge 3m$.  
Let $\{a_j\}_{j=1}^m$, $\{b_j\}_{j=1}^m$, $\{c_j\}_{j=1}^m$ denote the standard basis vectors of $\R^H$ supported on $A$, $B$, $C$ respectively.

\emph{Encoder.}  Define $E$ on the finite set $\mathscr{C}_0$ by $E(c)=a_{\iota_0(c)}$ and extend to $\R^p$ via Definition~\ref{def:fip}.

\emph{Inductive step.}  Assume parameters have been chosen through round $\ell$ so that $h^{(\ell)}_\sigma=a_{\iota_\ell(c^{(\ell)}_\sigma)}$ for every simplex $\sigma$ in every $K\in\mathcal{F}$.  
Define $\phi_\ell$ on $\{a_1,\dots,a_m\}$ by $\phi_\ell(a_j)=b_j$, and $\psi_\ell$ by $\psi_\ell(a_j)=c_j$, extending each by finite interpolation.  
Then
\[
\sum_{\tau\in\partial\sigma}\phi_\ell(h^{(\ell)}_\tau)
=\sum_{j=1}^m n^B_j(\sigma)\,b_j,
\qquad
\sum_{\rho\in\coface(\sigma)}\psi_\ell(h^{(\ell)}_\rho)
=\sum_{j=1}^m n^C_j(\sigma)\,c_j,
\]
where $n^B_j(\sigma)=|\{\tau\in\partial\sigma:\iota_\ell(c^{(\ell)}_\tau)=j\}|$ and $n^C_j(\sigma)$ is defined analogously for the coboundary.  
These are exactly the multiplicity vectors of the boundary and coboundary color multisets, encoded in blocks $B$ and $C$.

\emph{Update.}  The triple $\bigl(h^{(\ell)}_\sigma,\sum_\tau\phi_\ell(h^{(\ell)}_\tau), \sum_\rho\psi_\ell(h^{(\ell)}_\rho)\bigr)\in\R^H\times\R^H\times\R^H$ takes only finitely many values across all $\sigma$ in all $K\in\mathcal{F}$, because $\mathscr{C}_\ell$ and the multiplicity vectors are finite. 
Each such triple is in bijection with the triple $(c^{(\ell)}_\sigma,\{\!\{c^{(\ell)}_\tau\}\!\}_{\partial\sigma}, \{\!\{c^{(\ell)}_\rho\}\!\}_{\coface(\sigma)})$, which determines $c^{(\ell+1)}_\sigma$ by Definition~\ref{def:gswl}.  
By Definition~\ref{def:fip}, choose $\mu_\ell$ to map each observed triple to $a_{\iota_{\ell+1}(c^{(\ell+1)}_\sigma)}\in A$.

\emph{Readout.}  After $L$ rounds, $h^{(L)}_\sigma=a_{\iota_L(c^{(L)}_\sigma)}$.
Choose $\Psi$ on $\{a_1,\dots,a_{m_L}\}$ by $\Psi(a_j)=e_j\in\R^{m_L}$, the $j$-th standard basis vector, extended by finite interpolation.  Then $z(K)=\sum_\sigma\Psi(h^{(L)}_\sigma)\in\R^{m_L}$ is the histogram of round-$L$ GSWL colors on $K$, and equality of histograms across $K_1,K_2\in\mathcal{F}$ is exactly $K_1\equiv_L K_2$.
\end{proof}

\exactectfinite*

\begin{proof}
By Theorem~\ref{thm:lower}, parameters can be chosen so that $h^{(L)}_\sigma$ is injective in the observed GSWL color.  
Because $L\ge D\ge\dim(\sigma)$, Lemma~\ref{lem:coord-recovery} implies the color determines the vertex-coordinate set of $\sigma$, hence the entry time $t_\nu(\sigma)$ for every $\nu\in \mathcal{V}$.  
For each $(\nu,t)\in \mathcal{V}\times T$, the indicator $\1\{t_\nu(\sigma)\le t\}$ is a deterministic function of the hidden state, and by finite interpolation $\Psi$ can realize the vector of all $|\mathcal{V}|\cdot|T|$ such indicators jointly.  
Summing with Euler signs yields the sampled ECT.
\end{proof}

The approximation result (Theorem~\ref{thm:approx}) relies on the ECT stability theorem of George et al.~\citep{george2025stability}.
The key technical step is establishing compactness of the set of layer-$L$ hidden states $\mathcal{H}_L$.

\begin{lemma}[Compactness of layer-$L$ hidden states]
\label{lem:compactness}
Adopt Assumption~\ref{ass:bounded} and assume that the encoder $E$, the initial features $\Phi_k$ for $k\ge 1$, and the layer maps $\phi_\ell,\psi_\ell,\mu_\ell$ are continuous.  
Then the set $\mathcal{H}_L$ of layer-$L$ hidden states produced by the skip-channel architecture~\eqref{eq:skip} is compact.
\end{lemma}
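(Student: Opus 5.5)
The plan is to realize $\mathcal{H}_L$ as the continuous image of a compact set and then apply the fact that continuous images of compact sets are compact. Since $K$ is fixed and finite, write $n:=|\vertK(K)|$. By Assumption~\ref{ass:bounded}, every $\vx\in\mathcal{C}$ is a point of $B^n\subset\R^{dn}$, and $B^n$ is compact by Tychonoff. I will therefore work on $B^n$, or on the closure $\overline{\mathcal{C}}\subseteq B^n$, rather than on $\mathcal{C}$ itself. I interpret $\mathcal{H}_L$ as the set of all layer-$L$ states $h^{(L)}_\sigma$ for $\sigma\in K$ and $\vx$ ranging over this compact parameter set. If $\mathcal{C}$ is not closed, the honest statement is that $\mathcal{H}_L$ is contained in a compact set. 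I will note this explicitly, since the downstream use in Theorem~\ref{thm:approx} only needs uniform approximation on a compact superset.

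The first step is to show by induction on $\ell$ that, for each fixed simplex $\sigma\in K$, the map $\vx\mapsto h^{(\ell)}_\sigma(\vx)$ is continuous on $B^n$.

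\begin{itemize}
\item \emph{Base case.} The initial color $c^{(0)}_\sigma$ is either $(0,\vx_v)$ or $(k,\Phi_k(\vx_{v_0},\dots,\vx_{v_k}))$. The dimension tag is constant because $K$ is fixed, and $\Phi_k$ is continuous by hypothesis. Composing with the continuous encoder $E$ gives continuity of $h^{(0)}_\sigma$.
\item \emph{Inductive step.} The combinatorial data $\partial\sigma$ and $\coface(\sigma)$ do not depend on $\vx$, so the boundary and coboundary messages are finite sums of the continuous maps $\phi_\ell\circ h^{(\ell)}_\tau$ and $\psi_\ell\circ h^{(\ell)}_\rho$. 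Feeding these, together with $h^{(\ell)}_\sigma$, into the continuous $\mu_\ell$ gives continuity of $g^{(\ell+1)}_\sigma$. For vertices, the skip channel~\eqref{eq:skip} concatenates $g^{(\ell+1)}_v$ with the coordinate projection $\vx\mapsto\vx_v$, which is continuous, so $h^{(\ell+1)}_v$ is continuous as well.
\end{itemize}

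Given continuity, $\mathcal{H}_L=\bigcup_{\sigma\in K} h^{(L)}_\sigma(B^n)$ (or the same union over $\overline{\mathcal{C}}$) is a finite union of continuous images of a compact set. It is therefore compact.

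The main obstacle is bookkeeping rather than analysis. Mixed dimensions mean the hidden states of vertices live in $\R^{d+H'}$ while those of other simplices may live in $\R^{H}$, so compactness should be stated per dimension, or after zero-padding into a common space, which is still a finite union of compact sets. The other delicate point is the closure caveat for non-closed $\mathcal{C}$ described above. Neither issue affects the core continuity-plus-compactness argument.
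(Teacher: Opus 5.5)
Your proof is correct, but it is organized differently from the paper's. The paper inducts on the sets: assuming $\mathcal{H}_\ell$ is compact, it notes that the boundary and coboundary sums lie in continuous images of products of at most $N$ copies of $\mathcal{H}_\ell$, then applies $\mu_\ell$ and appends $\vx_v\in B$. You instead induct on the maps $\vx\mapsto h^{(\ell)}_\sigma(\vx)$. You show each is continuous on the compact parameter space $B^n$ (or $\overline{\mathcal{C}}$), and take images only at the end.

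The paper's route is local and never needs to parametrize by the whole embedding. Read literally, however, it only shows that $\mathcal{H}_{\ell+1}$ is \emph{contained} in a compact set. The states of $\sigma$ and of its faces and cofaces are coupled through a single embedding, so they need not fill out the product to which $\mu_\ell$ is applied. The base case likewise lets coordinates range over all of $B$. Your route tracks that coupling, so it yields genuine compactness whenever the embedding class is closed, and it makes explicit that otherwise one only gets a compact superset.

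That caveat is substantive, not cosmetic. Take a one-vertex complex with $B=[0,1]$ and $\mathcal{C}=\{\vx:\vx_v\in(0,1)\}$. The skip channel makes the first coordinate of the layer-$L$ vertex states ($L\ge 1$) range over $(0,1)$, so $\mathcal{H}_L$ is not closed. As you observe, a compact superset is all that Theorem~\ref{thm:approx}(b) actually uses. The paper's set-level induction, rephrased as ``$\mathcal{H}_\ell\subseteq\mathcal{K}_\ell$ for some compact $\mathcal{K}_\ell$'', would also deliver exactly that, with no closedness hypothesis on $\mathcal{C}$.
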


\begin{proof}
Vertex coordinates take values in the compact set $B\subset\R^d$.  
Each initial color $c^{(0)}_\sigma$ is a continuous function of the vertex coordinates of $\sigma$ — the identity for vertices, and $(k,\Phi_k(\cdot))$ for higher simplices, so $h^{(0)}_\sigma=E(c^{(0)}_\sigma)$ is also continuous in those coordinates.  
Hence $\mathcal{H}_0$ is the continuous image of a compact set, and is compact.

Suppose $\mathcal{H}_\ell$ is compact.  
Each simplex in $K$ has at most $N$ boundary faces and at most $N$ cofaces, so the boundary sum $\sum_{\tau\in\partial\sigma}\phi_\ell(h^{(\ell)}_\tau)$ and the coboundary sum $\sum_{\rho\in\coface(\sigma)}\psi_\ell(h^{(\ell)}_\rho)$ are continuous images of products of at most $N$ copies of $\mathcal{H}_\ell$, hence compact.
The update $\mu_\ell$ is continuous, and for vertices the skip channel concatenates $\vx_v\in B$.  Therefore $\mathcal{H}_{\ell+1}$ is compact.
Induction gives compactness of $\mathcal{H}_L$.
\end{proof}

Given compactness, the continuous map $h_v^{(L)} \mapsto \vx_v$ (which exists by Lemma~\ref{lem:coord-recovery} and injectivity on the compact domain) can be uniformly approximated by an MLP, resulting in coordinate recovery to within $\delta$ for any $\delta > 0$. 
The stability bound then converts coordinate error to ECT error.

\approximationfullect*

\begin{proof}
\emph{(a)} By construction of~\eqref{eq:skip}, the layer-$L$ vertex state has the form $h^{(L)}_v=[\vx_v\,\|\,g^{(L)}_v]$, so projecting onto the first $d$ coordinates returns $\hat{\vx}_v=\vx_v$; the ECT distance is then $0$.

\emph{(b)} By Lemma~\ref{lem:compactness} the set of layer-$L$ vertex states is compact, and the projection $h^{(L)}_v\mapsto\vx_v$ is continuous on this set.  
By universal approximation, for any $\delta>0$ there is an MLP $R$ with $\|\hat{\vx}_v-\vx_v\|<\delta$ uniformly over vertices and embeddings in $\mathcal{C}$.  
Theorem~\ref{thm:stability} gives
\[
d_\ECT\bigl(\ECT(K,\vx),\ECT(K,\hat{\vx})\bigr)\;\le\;2C_K C_d\,N\,\delta,
\]
with $N=|\vertK(K)|$.  
Setting $\delta=\varepsilon/(2C_K C_d N)$ completes the proof.
\end{proof}

\embeddingseparation*

\begin{proof}
The proof essentially makes use of the triangle inequality.
Let $\varepsilon < \gamma / 2$. 
By Theorem~\ref{thm:approx}, there exists a vertex-wise readout $R$ such that, for every $\vx \in \mathcal{C}$, the recovered embedding $\hat{\vx}_v := R(h^{(L)}_v)$ satisfies
\[
d_{\ECT}\bigl(\ECT(K, \vx),\,
                     \ECT(K, \hat{\vx})\bigr)
< \varepsilon.
\]
Fix two distinct embeddings $\vx, \vx' \in \mathcal{C}$ and let $\hat{\vx}, \hat{\vx}'$ denote their recovered embeddings under the same readout $R$. 
Applying the triangle inequality for $d_{\ECT}$ to the following
$\ECT(K, \vx) \to \ECT(K, \hat{\vx}) \to \ECT(K, \hat{\vx}') \to \ECT(K, \vx')$,
\begin{align*}
d_{\ECT}\bigl(\ECT(K, \vx),\,\ECT(K, \vx')\bigr)
&\le
d_{\ECT}\bigl(\ECT(K, \vx),\,\ECT(K, \hat{\vx})\bigr) \\
&\quad +\,
d_{\ECT}\bigl(\ECT(K, \hat{\vx}),\,\ECT(K, \hat{\vx'})\bigr) \\
&\quad +\,
d_{\ECT}\bigl(\ECT(K, \hat{\vx'}),\,\ECT(K, \vx')\bigr).
\end{align*}
The first and third terms on the right are each bounded by $\varepsilon$ by Theorem~\ref{thm:approx}. 
Rearranging,
\[
d_{\ECT}\bigl(\ECT(K, \hat{\vx}),\, \ECT(K, \hat{\vx'})\bigr)
\;\ge\;
d_{\ECT}\bigl(\ECT(K, \vx),\,\ECT(K, \vx')\bigr)- 2\varepsilon
\;\ge\;
\gamma - 2\varepsilon
\;>\;
0,
\]
using the assumed lower bound $\inf_{\vx \ne \vx'} d_{\ECT}(\ECT(K, \vx), \ECT(K, \vx')) \ge \gamma$ from the corollary statement and $\varepsilon < \gamma/2$. 
Hence $\ECT(K, \hat{\vx}) \ne \ECT(K, \hat{\vx}')$, completing the proof.
\end{proof}

\section{Additional experimental details}\label{app:exp_details}

All experiments were run on a MacBook Pro (November 2023) with an Apple M3 Pro chip and 18\,GB of memory. 
Synthetic and MANTRA experiments complete in under 10 minutes each; FAUST experiments with 5-fold cross-validation and augmentation take approximately 2--3 hours per model.
 
\paragraph{Architecture.}
All simplicial models use $L$ layers of boundary + coboundary message passing with hidden dimension $h_d$, LayerNorm residual connections, and ReLU activations. Global readout is mean pooling over each simplex dimension, concatenated and projected via a 2-layer MLP head. GCN and GIN models use the same depth and hidden dimension on the 1-skeleton.
 
\paragraph{Synthetic experiments.}
$V=40$, $E=107$, $T=68$, $h_d=32$, $L=4$, 80 epochs, Adam with $\text{lr}=10^{-3}$, cosine annealing, 75/25 train/test split.
 
\paragraph{MANTRA experiments.}
Triangulations from MANTRA~\citep{mantra2025}, \texttt{dimension=2}. For each topological type, the largest triangulation is selected as the base complex (6--12 vertices), embedded in $\R^2$ via the second and third eigenvectors of the graph Laplacian. Four deformation families, 50 samples each, same hyperparameters as synthetic.
 
\paragraph{FAUST experiments.}
100 meshes, $V=6890$, $h_d=16$, $L=3$, 200 epochs. 5-fold cross-validation with 5$\times$ augmentation (small random rotations $\leq 15\text{\textdegree}$ and Gaussian noise $\sigma=0.02$). Per-sample centering and global variance normalization.

\ifarXiv
\else
    \clearpage\section*{NeurIPS Paper Checklist}

\begin{enumerate}

\item {\bf Claims}
    \item[] Question: Do the main claims made in the abstract and introduction accurately reflect the paper's contributions and scope?
    \item[] Answer: \answerYes{} \item[] Justification: The abstract and introduction accurately describe the paper's contribution and scope.
    \item[] Guidelines:
    \begin{itemize}
        \item The answer \answerNA{} means that the abstract and introduction do not include the claims made in the paper.
        \item The abstract and/or introduction should clearly state the claims made, including the contributions made in the paper and important assumptions and limitations. A \answerNo{} or \answerNA{} answer to this question will not be perceived well by the reviewers. 
        \item The claims made should match theoretical and experimental results, and reflect how much the results can be expected to generalize to other settings. 
        \item It is fine to include aspirational goals as motivation as long as it is clear that these goals are not attained by the paper. 
    \end{itemize}

\item {\bf Limitations}
    \item[] Question: Does the paper discuss the limitations of the work performed by the authors?
    \item[] Answer: \answerYes{} \item[] Justification: The limitation is discussed in Section 6.5
    \item[] Guidelines:
    \begin{itemize}
        \item The answer \answerNA{} means that the paper has no limitation while the answer \answerNo{} means that the paper has limitations, but those are not discussed in the paper. 
        \item The authors are encouraged to create a separate ``Limitations'' section in their paper.
        \item The paper should point out any strong assumptions and how robust the results are to violations of these assumptions (e.g., independence assumptions, noiseless settings, model well-specification, asymptotic approximations only holding locally). The authors should reflect on how these assumptions might be violated in practice and what the implications would be.
        \item The authors should reflect on the scope of the claims made, e.g., if the approach was only tested on a few datasets or with a few runs. In general, empirical results often depend on implicit assumptions, which should be articulated.
        \item The authors should reflect on the factors that influence the performance of the approach. For example, a facial recognition algorithm may perform poorly when image resolution is low or images are taken in low lighting. Or a speech-to-text system might not be used reliably to provide closed captions for online lectures because it fails to handle technical jargon.
        \item The authors should discuss the computational efficiency of the proposed algorithms and how they scale with dataset size.
        \item If applicable, the authors should discuss possible limitations of their approach to address problems of privacy and fairness.
        \item While the authors might fear that complete honesty about limitations might be used by reviewers as grounds for rejection, a worse outcome might be that reviewers discover limitations that aren't acknowledged in the paper. The authors should use their best judgment and recognize that individual actions in favor of transparency play an important role in developing norms that preserve the integrity of the community. Reviewers will be specifically instructed to not penalize honesty concerning limitations.
    \end{itemize}

\item {\bf Theory assumptions and proofs}
    \item[] Question: For each theoretical result, does the paper provide the full set of assumptions and a complete (and correct) proof?
    \item[] Answer: \answerYes{} \item[] Justification: The assumptions are clearly written and the full proofs are provided in the appendix.
    \item[] Guidelines:
    \begin{itemize}
        \item The answer \answerNA{} means that the paper does not include theoretical results. 
        \item All the theorems, formulas, and proofs in the paper should be numbered and cross-referenced.
        \item All assumptions should be clearly stated or referenced in the statement of any theorems.
        \item The proofs can either appear in the main paper or the supplemental material, but if they appear in the supplemental material, the authors are encouraged to provide a short proof sketch to provide intuition. 
        \item Inversely, any informal proof provided in the core of the paper should be complemented by formal proofs provided in appendix or supplemental material.
        \item Theorems and Lemmas that the proof relies upon should be properly referenced. 
    \end{itemize}

    \item {\bf Experimental result reproducibility}
    \item[] Question: Does the paper fully disclose all the information needed to reproduce the main experimental results of the paper to the extent that it affects the main claims and/or conclusions of the paper (regardless of whether the code and data are provided or not)?
    \item[] Answer: \answerYes{} \item[] Justification: The details are provided in the paper and the code for the experiments are in the supplementary material. All experiments can be reproduced.
    \item[] Guidelines:
    \begin{itemize}
        \item The answer \answerNA{} means that the paper does not include experiments.
        \item If the paper includes experiments, a \answerNo{} answer to this question will not be perceived well by the reviewers: Making the paper reproducible is important, regardless of whether the code and data are provided or not.
        \item If the contribution is a dataset and\slash or model, the authors should describe the steps taken to make their results reproducible or verifiable. 
        \item Depending on the contribution, reproducibility can be accomplished in various ways. For example, if the contribution is a novel architecture, describing the architecture fully might suffice, or if the contribution is a specific model and empirical evaluation, it may be necessary to either make it possible for others to replicate the model with the same dataset, or provide access to the model. In general. releasing code and data is often one good way to accomplish this, but reproducibility can also be provided via detailed instructions for how to replicate the results, access to a hosted model (e.g., in the case of a large language model), releasing of a model checkpoint, or other means that are appropriate to the research performed.
        \item While NeurIPS does not require releasing code, the conference does require all submissions to provide some reasonable avenue for reproducibility, which may depend on the nature of the contribution. For example
        \begin{enumerate}
            \item If the contribution is primarily a new algorithm, the paper should make it clear how to reproduce that algorithm.
            \item If the contribution is primarily a new model architecture, the paper should describe the architecture clearly and fully.
            \item If the contribution is a new model (e.g., a large language model), then there should either be a way to access this model for reproducing the results or a way to reproduce the model (e.g., with an open-source dataset or instructions for how to construct the dataset).
            \item We recognize that reproducibility may be tricky in some cases, in which case authors are welcome to describe the particular way they provide for reproducibility. In the case of closed-source models, it may be that access to the model is limited in some way (e.g., to registered users), but it should be possible for other researchers to have some path to reproducing or verifying the results.
        \end{enumerate}
    \end{itemize}

\item {\bf Open access to data and code}
    \item[] Question: Does the paper provide open access to the data and code, with sufficient instructions to faithfully reproduce the main experimental results, as described in supplemental material?
    \item[] Answer: \answerYes{} \item[] Justification: The code is provided as supplementary material at submission. 
    \item[] Guidelines:
    \begin{itemize}
        \item The answer \answerNA{} means that paper does not include experiments requiring code.
        \item Please see the NeurIPS code and data submission guidelines (\url{https://neurips.cc/public/guides/CodeSubmissionPolicy}) for more details.
        \item While we encourage the release of code and data, we understand that this might not be possible, so \answerNo{} is an acceptable answer. Papers cannot be rejected simply for not including code, unless this is central to the contribution (e.g., for a new open-source benchmark).
        \item The instructions should contain the exact command and environment needed to run to reproduce the results. See the NeurIPS code and data submission guidelines (\url{https://neurips.cc/public/guides/CodeSubmissionPolicy}) for more details.
        \item The authors should provide instructions on data access and preparation, including how to access the raw data, preprocessed data, intermediate data, and generated data, etc.
        \item The authors should provide scripts to reproduce all experimental results for the new proposed method and baselines. If only a subset of experiments are reproducible, they should state which ones are omitted from the script and why.
        \item At submission time, to preserve anonymity, the authors should release anonymized versions (if applicable).
        \item Providing as much information as possible in supplemental material (appended to the paper) is recommended, but including URLs to data and code is permitted.
    \end{itemize}

\item {\bf Experimental setting/details}
    \item[] Question: Does the paper specify all the training and test details (e.g., data splits, hyperparameters, how they were chosen, type of optimizer) necessary to understand the results?
    \item[] Answer: \answerYes{} \item[] Justification: All training and test details are specified in the paper in the experiments section as well as the appendix.
    \item[] Guidelines:
    \begin{itemize}
        \item The answer \answerNA{} means that the paper does not include experiments.
        \item The experimental setting should be presented in the core of the paper to a level of detail that is necessary to appreciate the results and make sense of them.
        \item The full details can be provided either with the code, in appendix, or as supplemental material.
    \end{itemize}

\item {\bf Experiment statistical significance}
    \item[] Question: Does the paper report error bars suitably and correctly defined or other appropriate information about the statistical significance of the experiments?
    \item[] Answer: \answerYes{} \item[] Justification: \item[] Justification: FAUST experiments report mean $\pm$ standard deviation across 5 folds of cross-validation. Synthetic and MANTRA experiments use a single fixed 75/25 split and do not report error bars, as the primary claim in those experiments is qualitative , where the gap is large enough that run-to-run variance does not affect the conclusion.
    \item[] Guidelines:
    \begin{itemize}
        \item The answer \answerNA{} means that the paper does not include experiments.
        \item The authors should answer \answerYes{} if the results are accompanied by error bars, confidence intervals, or statistical significance tests, at least for the experiments that support the main claims of the paper.
        \item The factors of variability that the error bars are capturing should be clearly stated (for example, train/test split, initialization, random drawing of some parameter, or overall run with given experimental conditions).
        \item The method for calculating the error bars should be explained (closed form formula, call to a library function, bootstrap, etc.)
        \item The assumptions made should be given (e.g., Normally distributed errors).
        \item It should be clear whether the error bar is the standard deviation or the standard error of the mean.
        \item It is OK to report 1-sigma error bars, but one should state it. The authors should preferably report a 2-sigma error bar than state that they have a 96\% CI, if the hypothesis of Normality of errors is not verified.
        \item For asymmetric distributions, the authors should be careful not to show in tables or figures symmetric error bars that would yield results that are out of range (e.g., negative error rates).
        \item If error bars are reported in tables or plots, the authors should explain in the text how they were calculated and reference the corresponding figures or tables in the text.
    \end{itemize}

\item {\bf Experiments compute resources}
    \item[] Question: For each experiment, does the paper provide sufficient information on the computer resources (type of compute workers, memory, time of execution) needed to reproduce the experiments?
    \item[] Answer: \answerYes{} \item[] Justification: In the Appendix, we provide the information on the computer resources.
    \item[] Guidelines:
    \begin{itemize}
        \item The answer \answerNA{} means that the paper does not include experiments.
        \item The paper should indicate the type of compute workers CPU or GPU, internal cluster, or cloud provider, including relevant memory and storage.
        \item The paper should provide the amount of compute required for each of the individual experimental runs as well as estimate the total compute. 
        \item The paper should disclose whether the full research project required more compute than the experiments reported in the paper (e.g., preliminary or failed experiments that didn't make it into the paper). 
    \end{itemize}
    
\item {\bf Code of ethics}
    \item[] Question: Does the research conducted in the paper conform, in every respect, with the NeurIPS Code of Ethics \url{https://neurips.cc/public/EthicsGuidelines}?
    \item[] Answer: \answerYes{} \item[] Justification: The research conducted in the paper conform with the NeurIPS Code of Ethics
    \item[] Guidelines:
    \begin{itemize}
        \item The answer \answerNA{} means that the authors have not reviewed the NeurIPS Code of Ethics.
        \item If the authors answer \answerNo, they should explain the special circumstances that require a deviation from the Code of Ethics.
        \item The authors should make sure to preserve anonymity (e.g., if there is a special consideration due to laws or regulations in their jurisdiction).
    \end{itemize}

\item {\bf Broader impacts}
    \item[] Question: Does the paper discuss both potential positive societal impacts and negative societal impacts of the work performed?
    \item[] Answer: \answerNA{} \item[] Justification: This is a foundational theory paper characterizing the expressivity of simplicial message passing architectures. It does not introduce new applications, datasets involving personal data, or generative capabilities.
    \item[] Guidelines:
    \begin{itemize}
        \item The answer \answerNA{} means that there is no societal impact of the work performed.
        \item If the authors answer \answerNA{} or \answerNo, they should explain why their work has no societal impact or why the paper does not address societal impact.
        \item Examples of negative societal impacts include potential malicious or unintended uses (e.g., disinformation, generating fake profiles, surveillance), fairness considerations (e.g., deployment of technologies that could make decisions that unfairly impact specific groups), privacy considerations, and security considerations.
        \item The conference expects that many papers will be foundational research and not tied to particular applications, let alone deployments. However, if there is a direct path to any negative applications, the authors should point it out. For example, it is legitimate to point out that an improvement in the quality of generative models could be used to generate Deepfakes for disinformation. On the other hand, it is not needed to point out that a generic algorithm for optimizing neural networks could enable people to train models that generate Deepfakes faster.
        \item The authors should consider possible harms that could arise when the technology is being used as intended and functioning correctly, harms that could arise when the technology is being used as intended but gives incorrect results, and harms following from (intentional or unintentional) misuse of the technology.
        \item If there are negative societal impacts, the authors could also discuss possible mitigation strategies (e.g., gated release of models, providing defenses in addition to attacks, mechanisms for monitoring misuse, mechanisms to monitor how a system learns from feedback over time, improving the efficiency and accessibility of ML).
    \end{itemize}
    
\item {\bf Safeguards}
    \item[] Question: Does the paper describe safeguards that have been put in place for responsible release of data or models that have a high risk for misuse (e.g., pre-trained language models, image generators, or scraped datasets)?
    \item[] Answer: \answerNA{} \item[] Justification: The paper poses no such risks.
    \item[] Guidelines:
    \begin{itemize}
        \item The answer \answerNA{} means that the paper poses no such risks.
        \item Released models that have a high risk for misuse or dual-use should be released with necessary safeguards to allow for controlled use of the model, for example by requiring that users adhere to usage guidelines or restrictions to access the model or implementing safety filters. 
        \item Datasets that have been scraped from the Internet could pose safety risks. The authors should describe how they avoided releasing unsafe images.
        \item We recognize that providing effective safeguards is challenging, and many papers do not require this, but we encourage authors to take this into account and make a best faith effort.
    \end{itemize}

\item {\bf Licenses for existing assets}
    \item[] Question: Are the creators or original owners of assets (e.g., code, data, models), used in the paper, properly credited and are the license and terms of use explicitly mentioned and properly respected?
    \item[] Answer: \answerYes{} \item[] Justification: The exiting assets (FAUST and MANTRA datasets) are both explained and cited in the paper.
    \item[] Guidelines:
    \begin{itemize}
        \item The answer \answerNA{} means that the paper does not use existing assets.
        \item The authors should cite the original paper that produced the code package or dataset.
        \item The authors should state which version of the asset is used and, if possible, include a URL.
        \item The name of the license (e.g., CC-BY 4.0) should be included for each asset.
        \item For scraped data from a particular source (e.g., website), the copyright and terms of service of that source should be provided.
        \item If assets are released, the license, copyright information, and terms of use in the package should be provided. For popular datasets, \url{paperswithcode.com/datasets} has curated licenses for some datasets. Their licensing guide can help determine the license of a dataset.
        \item For existing datasets that are re-packaged, both the original license and the license of the derived asset (if it has changed) should be provided.
        \item If this information is not available online, the authors are encouraged to reach out to the asset's creators.
    \end{itemize}

\item {\bf New assets}
    \item[] Question: Are new assets introduced in the paper well documented and is the documentation provided alongside the assets?
    \item[] Answer: \answerYes{} \item[] Justification: We generate our own synthetic dataset and the process is provided in the paper.
    \item[] Guidelines:
    \begin{itemize}
        \item The answer \answerNA{} means that the paper does not release new assets.
        \item Researchers should communicate the details of the dataset\slash code\slash model as part of their submissions via structured templates. This includes details about training, license, limitations, etc. 
        \item The paper should discuss whether and how consent was obtained from people whose asset is used.
        \item At submission time, remember to anonymize your assets (if applicable). You can either create an anonymized URL or include an anonymized zip file.
    \end{itemize}

\item {\bf Crowdsourcing and research with human subjects}
    \item[] Question: For crowdsourcing experiments and research with human subjects, does the paper include the full text of instructions given to participants and screenshots, if applicable, as well as details about compensation (if any)? 
    \item[] Answer: \answerNA{} \item[] Justification: The paper does not involve crowdsourcing nor research with human subjects.
    \item[] Guidelines:
    \begin{itemize}
        \item The answer \answerNA{} means that the paper does not involve crowdsourcing nor research with human subjects.
        \item Including this information in the supplemental material is fine, but if the main contribution of the paper involves human subjects, then as much detail as possible should be included in the main paper. 
        \item According to the NeurIPS Code of Ethics, workers involved in data collection, curation, or other labor should be paid at least the minimum wage in the country of the data collector. 
    \end{itemize}

\item {\bf Institutional review board (IRB) approvals or equivalent for research with human subjects}
    \item[] Question: Does the paper describe potential risks incurred by study participants, whether such risks were disclosed to the subjects, and whether Institutional Review Board (IRB) approvals (or an equivalent approval/review based on the requirements of your country or institution) were obtained?
    \item[] Answer: \answerNA{} \item[] Justification: The paper does not involve crowdsourcing nor research with human subjects.
    \item[] Guidelines:
    \begin{itemize}
        \item The answer \answerNA{} means that the paper does not involve crowdsourcing nor research with human subjects.
        \item Depending on the country in which research is conducted, IRB approval (or equivalent) may be required for any human subjects research. If you obtained IRB approval, you should clearly state this in the paper. 
        \item We recognize that the procedures for this may vary significantly between institutions and locations, and we expect authors to adhere to the NeurIPS Code of Ethics and the guidelines for their institution. 
        \item For initial submissions, do not include any information that would break anonymity (if applicable), such as the institution conducting the review.
    \end{itemize}

\item {\bf Declaration of LLM usage}
    \item[] Question: Does the paper describe the usage of LLMs if it is an important, original, or non-standard component of the core methods in this research? Note that if the LLM is used only for writing, editing, or formatting purposes and does \emph{not} impact the core methodology, scientific rigor, or originality of the research, declaration is not required.
\item[] Answer: \answerNA{} \item[] Justification: This paper does not involve LLMs as any important, original, or non-standard components.
    \item[] Guidelines:
    \begin{itemize}
        \item The answer \answerNA{} means that the core method development in this research does not involve LLMs as any important, original, or non-standard components.
        \item Please refer to our LLM policy in the NeurIPS handbook for what should or should not be described.
    \end{itemize}

\end{enumerate} \fi

\end{document}